\documentclass{article}
\usepackage[preprint]{style}

\usepackage{amsmath,amssymb,amsthm}
\usepackage{mathtools}
\usepackage{booktabs}
\usepackage{multirow}
\usepackage{graphicx}
\graphicspath{{figures/}}
\usepackage{xcolor}
\usepackage{colortbl}
\definecolor{tablerow}{gray}{0.95}
\definecolor{bestblue}{RGB}{31,95,159}
\definecolor{secondorange}{RGB}{224,123,57}
\newcommand{\bestmse}[1]{\textcolor{bestblue}{\textbf{#1}}}
\newcommand{\secondmse}[1]{\textcolor{secondorange}{\underline{#1}}}
\usepackage{hyperref}
\usepackage{cleveref}
\usepackage{enumitem}
\usepackage{xspace}
\usepackage{pdflscape}
\usepackage{float}
\usepackage{rotating}

\newtheorem{theorem}{Theorem}[section]
\newtheorem{proposition}[theorem]{Proposition}
\newtheorem{corollary}[theorem]{Corollary}

\theoremstyle{definition}
\newtheorem{definition}[theorem]{Definition}
\theoremstyle{remark}

\theoremstyle{definition}
\newtheorem{assumption}[theorem]{Assumption}

\newcommand{\Ebar}{\bar{r}}

\newcommand{\Jdyn}{\mathcal{J}_{\text{dyn}}}
\newcommand{\Juni}{\mathcal{J}_{\text{uni}}}
\newcommand{\Dsem}{D_{\text{sem}}}
\newcommand{\LL}{\mathcal{L}}
\newcommand{\Cov}{\mathrm{Cov}}
\newcommand{\Var}{\mathrm{Var}}
\newcommand{\Corr}{\mathrm{Corr}}
\DeclareMathOperator*{\argmin}{arg\,min}

\title{Adaptive Patching Is Harder Than it Looks for Time-Series Forecasting}

\author{%
Federico Zucchi$^{1,5}$ \quad
Yi Xie$^{2}$ \quad
Chao Zhang$^{3}$ \quad
Keyuan Luo$^{4}$ \quad
Thomas Lampert$^{1}$ \quad
Ziyue Li$^{2,6}$\\[2mm]
$^{1}$ICube, University of Strasbourg, Illkirch-Graffenstaden, France\\
$^{2}$Technical University of Munich\\
$^{3}$FinTech Thrust, The Hong Kong University of Science and Technology (Guangzhou)\\
$^{4}$Computer Science Department, Hainan Bielefeld University of Applied Sciences\\
$^{5}$Cephalgo, Strasbourg, France\\
$^{6}$Heilbronn Data Science Center, Munich Data Science Institute\\[1mm]
\texttt{f.zucchi@unistra.fr} \quad
\texttt{aaron.xie@tum.de} \quad
\texttt{chaoz@hkust-gz.edu.cn}\\
\texttt{keyuan.luo@hibiuh.edu.cn} \quad
\texttt{lampert@unistra.fr} \quad
\texttt{ziyue.li@tum.de}
}
\begin{document}
\maketitle

% ============================================================================
% ABSTRACT
% ============================================================================
\begin{abstract}
Adaptive patching is a recent and compelling  proposal for time-series Transformers: allocate finer patches where the sequence looks locally informative. This paper asks under what conditions a content-adaptive patching operator should outperform a tuned uniform one. Local heterogeneity alone is not enough: under pointwise forecasting losses, a complex-looking region is not automatically one where finer patching reduces the loss. We model patching as a budgeted bitrate allocation and derive an explicit threshold that a dynamic patching rule must satisfy to beat a well-tuned uniform baseline, then bound the achievable improvement both locally (a quadratic surrogate) and globally (a strong-convexity bound under the model's assumptions). Two structural results follow: without a coupling constraint, scalar local complexity cannot produce a non-uniform optimum under a common loss landscape; and once the backbone is trained to its representation-aware optimum, the alignment gain collapses around a well-tuned uniform patch size. To test these predictions, we run a controlled isolation study on three representative architectures, replacing each adaptive mechanism with a uniform patch-size sweep while keeping the backbone, data, and training protocol fixed. On standard long-horizon forecasting benchmarks, the validation-selected uniform baseline is competitive with the dynamic counterpart, with per-setting effects concentrated near zero and no consistent directional advantage once results are aggregated by dataset. The larger gains we do observe are method- and dataset-specific. Adaptive patching should therefore be evaluated against a tuned uniform baseline; its value depends on whether a cheap and reliable routing signal can identify where finer patches actually reduce forecasting loss.
\end{abstract}

% ============================================================================
% 1. INTRODUCTION
% ============================================================================
\section{Introduction}
\label{sec:intro}
Patching is now the default tokenization step for time-series Transformers.
The construction came straight from vision~\citep{dosovitskiy2021image} via
PatchTST~\citep{nie2023patchtst}, and was kept by later forecasting
architectures and foundation models~\citep{itransformer,timesfm,moirai,ansari2024chronos}.
Vision has moved on. A line of work there argues that a uniform grid
wastes resolution on flat regions and starves detailed ones, and adapts
the token budget where the image carries more
structure~\citep{dehghani2023navit,ronen2023apsvit,yin2022avit,bolya2023tome,dart2025}.
Carrying this idea over to time series sounds reasonable: change points,
regime shifts, and bursts of high-frequency structure look meaningful in
the input, so they should be where finer patches reduce the
forecasting loss. This is the premise behind a growing family of dynamic
patching methods, which route boundaries or token granularity using predictive
entropy, temporal heterogeneity, learned information density, or
mixture-of-size mechanisms~\citep{abeywickrama2025entrope,ding2026timemosaic,huang2024hdmixer,ankireddy2026timesqueeze,feng2025kairos}.

The premise is harder to defend than it looks, because ``looks informative''
and ``finer patching helps the loss'' are two different statements. First,
there is no canonical definition of local information density in a
forecasting series, so every method commits to a proxy and inherits
whatever that proxy gets wrong. Second, pointwise losses like MSE do not
by themselves say where extra resolution should be spent; a high-variance
region need not be one whose error a finer patch can actually reduce.
Third, even when the routing is on the right track, any gain has to clear
the overhead of the routing mechanism, and that gain shrinks fast once the
proxy is noisy or weakly aligned with the loss.

\begin{figure*}[t]
  \centering
  \begin{minipage}[t]{0.49\textwidth}
    \centering
    \includegraphics[width=\linewidth]{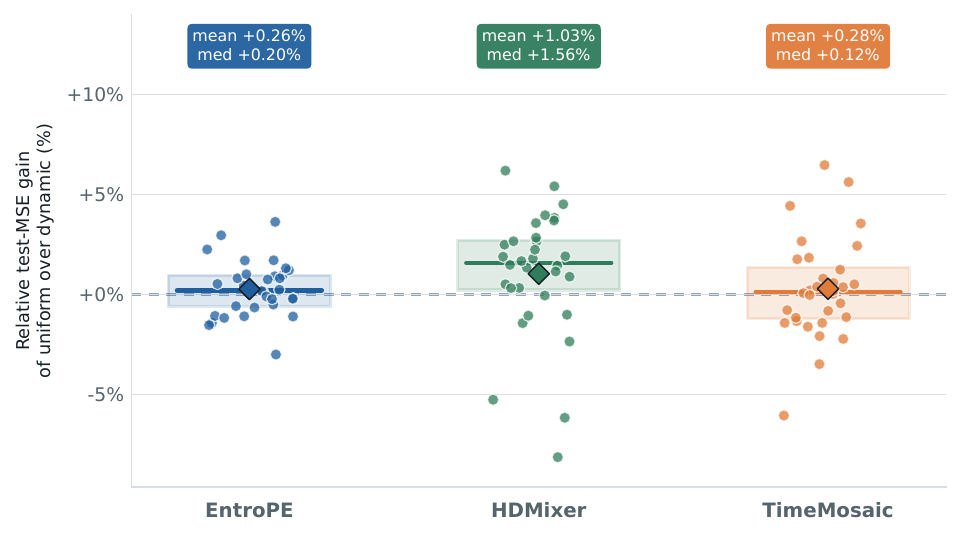}
  \end{minipage}
  \hfill
  \begin{minipage}[t]{0.49\textwidth}
    \centering
    \includegraphics[width=\linewidth]{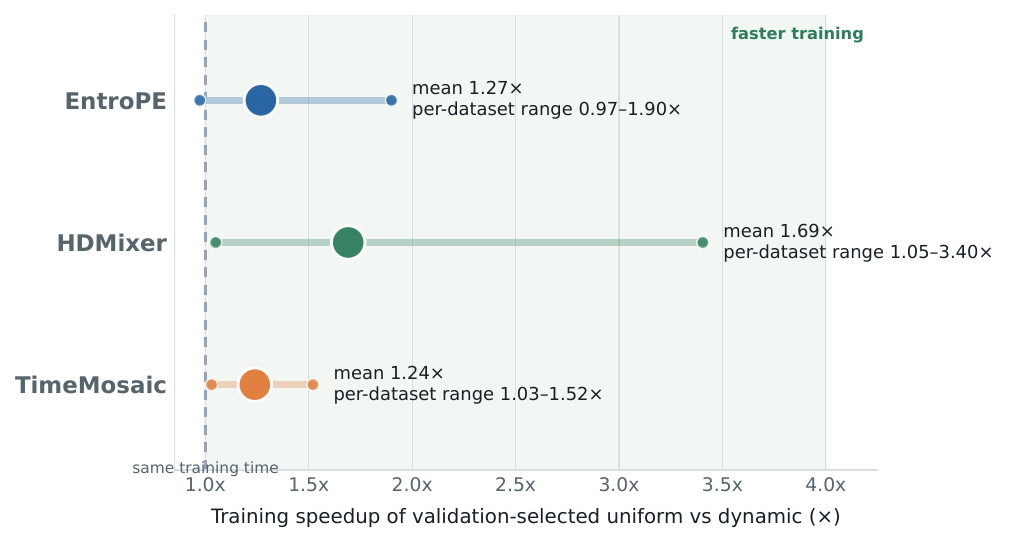}
  \end{minipage}
  \caption[Tuned uniform patching versus dynamic methods]{\textbf{Tuned uniform patching versus three dynamic methods.} For each (method, dataset, horizon), we choose the uniform patch size with the lowest validation MSE, then report the corresponding test-MSE gain and training speed. Left: relative test-MSE gain of that validation-selected uniform patch over the matched dynamic baseline. Right: speedup uses the same selected patch, averaged over horizons within each dataset.}
  \label{fig:hero}
\end{figure*}

\paragraph{This paper.}
Our starting point is empirical. We take three recent dynamic patching
methods, EntroPE~\citep{abeywickrama2025entrope},
TimeMosaic~\citep{ding2026timemosaic}, and HDMixer~\citep{huang2024hdmixer},
and replace each adaptive mechanism with a uniform patch-size sweep under
the same backbone, data, and training protocol. The validation-selected
uniform baseline's test MSE matches or beats the dynamic variant on
$56.3\%$, $75.0\%$, and $59.4\%$ of settings for EntroPE, HDMixer, and
TimeMosaic respectively
(\Cref{fig:hero}, left), at training cost at parity or lower (per-method mean
training speedup at the validation-selected patch is $1.27\times$ for
EntroPE, $1.69\times$ for HDMixer, and $1.24\times$ for TimeMosaic; up to
$3.4\times$ on individual datasets, \Cref{fig:hero}, right). The pattern is familiar from long-horizon
forecasting, where \citet{zeng2023transformers} showed that DLinear can
match tuned Transformer architectures and \citet{chen2023tsmixer}
reinforced the point with TSMixer. We push that intuition one level down.
Instead of questioning whether the backbone must be a Transformer, we focus on whether the underlying tokenization policy itself needs to be adaptive.
% The right
% comparison for a dynamic patcher is a tuned uniform patcher, not a default
% patch size.

To explain when this should and should not happen, we adapt an
information-theoretic picture. Treat patching as a budgeted bitrate
allocation, with local complexity $K_{t}\geq 0$ matched against bitrate
$r_{t}$ and forecasting error approximated by a strictly convex distortion
$D$. A short calculation gives a ceiling on how much a dynamic schedule
can gain over a budget-matched uniform baseline (\Cref{thm:delta-max}):
\begin{equation}
\label{eq:gain-ceiling}
\Delta_{\max}
\;=\;
\frac{\bigl(-D'(\bar r)\bigr)^{2}\,\rho(K,r)^{2}\,\sigma_{K}^{2}}
     {2\,\bar K\,D''(\bar r)}.
\end{equation}
The numerator collects what a dynamic schedule can win by routing bitrate
where it reduces loss. The denominator is what it must pay: the
curvature $D''(\bar r)$ is the rate at which Jensen's inequality penalises
any non-uniform allocation under a convex loss. Three consequences read
directly off the formula and line up with \Cref{fig:hero}. With no alignment
($\rho=0$), the ceiling is zero: routing uncorrelated with loss-relevant
complexity provides no gain. Separately, without a budget coupling between
time steps, scalar local complexity cannot produce a non-uniform optimum
regardless of the routing signal; this is the scalar-invariance result
(\Cref{thm:scalar}). Because $\rho$ enters quadratically, the gain ceiling
falls off rapidly as alignment weakens: even modest misalignment can
eliminate most of the available headroom. Adaptivity also needs real
complexity variation rather than just observable irregularity, since
$\sigma_{K}^{2}$ controls the headroom linearly. And once the uniform
baseline is tuned near its representation-aware optimum, $-D'(\bar r)$
approaches zero and the ceiling collapses; this is the optimality
trap.

\paragraph{Contributions.}
\begin{itemize}
  \item A \emph{controlled isolation study} on EntroPE, TimeMosaic, and
    HDMixer: with backbone, data, and training protocol held fixed, replacing
    each adaptive mechanism with a tuned uniform patch size leaves accuracy
    near parity and training cost the same or lower, with speedups up to
    $5.3\times$ on a selected horizon-level setting (\Cref{fig:hero}).
  \item A \emph{theoretical account of when adaptivity can and cannot help}:
    dynamic patching wins only if its routing signal aligns well with the
    regions where extra resolution actually reduces forecasting loss, and the
    alignment must be strong enough to overcome the penalty for spending the
    budget unevenly. That headroom is structurally fragile. Modest
    misalignment removes most of it; tuning the uniform patch size to the
    backbone's sweet spot collapses what is left, the optimality trap; and a
    scalar measure of local complexity on its own cannot drive a non-uniform
    optimum without a coupling constraint between time steps
    (\Cref{thm:scalar}).
\end{itemize}

% ============================================================================
% 2. RELATED WORK
% ============================================================================
\section{Related work}
\label{sec:related}

\paragraph{Uniform patching for time-series Transformers.}
PatchTST~\citep{nie2023patchtst} established that partitioning a length-$T$ series
into $N = \lceil T/b \rceil$ patches of fixed length $b$ reduces
attention cost from $\mathcal{O}(T^{2})$ to $\mathcal{O}(N^{2})$ while providing
implicit local smoothing; iTransformer~\citep{itransformer} kept a fixed-token
convention while reorganising what each token represents.
Subsequent foundation-model work
(TimesFM~\citep{timesfm}, Chronos~\citep{ansari2024chronos}, Moirai~\citep{moirai})
mostly retained static patching or tokenization choices as the default. We treat
the \emph{tuned} uniform patch size as the operating point whose difficulty of
improvement we characterise, extending to tokenization the strong-baseline
lesson that DLinear and TSMixer established for the
backbone~\citep{zeng2023transformers,chen2023tsmixer}.

\paragraph{Dynamic patching and adaptive tokenization in time series.} Recent
methods place boundaries adaptively or vary granularity using a range of proxies:
EntroPE~\citep{abeywickrama2025entrope} uses predictive entropy;
TimeSqueeze~\citep{ankireddy2026timesqueeze} uses an SSM encoder to measure
information density; TimeMosaic~\citep{ding2026timemosaic} uses temporal
heterogeneity; PathFormer~\citep{chen2024pathformer}, DualSG~\citep{ding2025dualsg},
PatchMLP~\citep{patchmlp}, and Kairos~\citep{feng2025kairos} route or mix
multiple granularities; and HDMixer~\citep{huang2024hdmixer} mixes hierarchical,
length-variable patches. ReinPatch~\citep{wu2026reinpatch} instead learns patch
boundary placement as a reinforcement-learning policy jointly with the downstream
sequence backbone. A related but distinct line reshapes the
training objective or patch-specific processing rather than the boundaries
themselves~\citep{kudrat2025patch,hu2025timefilter}. Fixed multiscale alternatives such as TimeMixer~\citep{wang2024timemixer}
and N-HiTS~\citep{challu2023nhits}, show that hierarchical structure can be
exploited without dynamic patch allocation; gains attributed to adaptivity may
partly reflect this hierarchy rather than within-window variation.

\paragraph{Adaptive tokenization and computation.}
Vision transformers provide the empirical precedent for adaptive
tokenization. NaViT~\citep{dehghani2023navit} processes images at their
native resolution by packing variable-length token sequences; adaptive
patch-size ViTs~\citep{ronen2023apsvit} learn spatially-varying granularity;
A-ViT~\citep{yin2022avit} halts computation per token according to an
informativeness criterion; and Token Merging~\citep{bolya2023tome} compresses
redundant tokens post-hoc. The same idea has a longer history in \emph{adaptive computation}: Adaptive
Computation Time lets recurrent networks learn input-dependent computation
steps~\citep{graves2016act}; PonderNet learns a probabilistic halting
distribution~\citep{banino2021pondernet}; sparse Mixture-of-Experts routes
examples or tokens to a subset of experts~\citep{shazeer2017moe,fedus2022switch};
and Mixture-of-Depths routes tokens through a subset of Transformer
blocks~\citep{raposo2024mod}. These works motivate the same trade-off studied
here: adaptive computation is useful only when the routing or halting policy is
cheap, stable, and aligned with the objective. 

% Vision is a comparatively
% favorable setting for this trade-off: adaptive token computation and token
% merging can yield substantial inference savings with modest accuracy loss, and mixed-resolution tokenizers can allocate
% more capacity to salient image regions~\citep{dart2025}. In our
% time-series experiments, the analogous savings are less automatic because the adaptive patching overhead is large relative to the observed forecasting gains.

% ============================================================================
% 3. THEORY
% ============================================================================
\section{Theory: structural limits of dynamic patching}
\label{sec:theory}

A minimal formal model makes the paper’s main question explicit: when, and by
how much, can a dynamic patching scheme improve over a tuned uniform baseline? Our formalization
follows classical rate--distortion theory~\citep{cover2006elements}, which trades bitrate against
distortion.

\subsection{Setup}
\label{sec:setup}

\begin{definition}[Bitrate density]
\label{def:bitrate}
Fix a horizon of $T$ steps and a per-token capacity $C$. A patching scheme produces
$N$ patches $P_{1}, \dots, P_{N}$ covering (possibly overlapping) intervals
$S_{i}$ of length $L_{i}$. The \emph{effective bitrate} at step $t$ is
\refstepcounter{equation}\label{eq:bitrate}%
$r_{t} \;:=\; \sum_{i=1}^{N} \frac{C}{L_{i}} \,\mathbf{1}\!\left\{t \in S_{i}\right\}$%
\hfill(\theequation)
so each patch contributes $C/L_{i}$ bits uniformly across its span.
\end{definition}

\begin{definition}[Feasible allocations and objective]
\label{def:objective}
For analysis, sample $t$ uniformly from $\{0,\ldots,T{-}1\}$ and write
$K=K_t$, $r=r_t$, and $\mathbb{E}$ for the empirical average over time. Let
$K_t\ge 0$ with $\bar K:=\mathbb{E}[K]>0$, and let
$I\subset(0,\infty)$ be the admissible bitrate interval. The continuous
relaxation is $\mathcal{F}(\Ebar,I):=\{r\in I^T:\mathbb{E}[r]=\Ebar\}$,
of which realizable patching policies form a subset. For a convex distortion
$D:I\to\mathbb{R}$, define the normalized dynamic loss
$\Jdyn(r):=\mathbb{E}[K\,D(r)]$, uniform loss $\Juni:=\bar K\,D(\Ebar)$, and
gain $\Delta_D(r):=\Juni-\Jdyn(r)$.
When pure rate--distortion behavior is invoked, we additionally assume
$D'(r)<0$ and $D''(r)>0$ on the relevant interval.
\end{definition}

The basic question is whether a dynamic allocation $\{r_t\}$ can make
$\Delta_D(r)>0$ against the uniform baseline $r_t\equiv\Ebar$. Before
introducing the budget, we ask why unconstrained dynamic behavior would
not arise from scalar complexity alone.

\subsection{Scalar invariance and the necessity of coupling}
\label{sec:scalar}

Without any constraint linking the time steps, a common per-step rate
loss $g$ gives the decoupled problem
\begin{equation}
\label{eq:decouple}
\min_{\{r_t\in I\}} \sum_{t=0}^{T-1} K_t\,g(r_t)
\;=\;
\sum_{t=0}^{T-1} \min_{r_t\in I}\,K_t\,g(r_t).
\end{equation}
The local complexity $K_t$ is only a positive scalar multiplier: it changes
the cost of choosing the wrong rate, not which rate is pointwise optimal.

\begin{theorem}[Scalar invariance in the unconstrained regime]
\label{thm:scalar}
Let $K_t>0$ and let the same rate loss $g:I\to\mathbb{R}$ apply at every time
step. If $g$ has a unique minimiser $r^\star\in I$, the unique unconstrained
optimizer is $r_t^\star=r^\star$ for all $t$. If instead $g=D$ is strictly
decreasing on a compact interval $I=[r_{\min},r_{\max}]$, the unconstrained
optimizer is the boundary allocation $r_t^\star=r_{\max}$ for all $t$; on an
unbounded interval the pure decreasing problem has no finite optimum. In all
cases, scalar complexity alone cannot induce a dynamic rate allocation.
\end{theorem}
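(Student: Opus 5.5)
The plan is to reduce everything to the separable structure recorded in \eqref{eq:decouple}. Without a coupling constraint, the feasible set is the product $I^T$ and the objective is a sum of terms each depending on one coordinate $r_t$. So the joint minimum equals the sum of the per-coordinate minima, and a vector $r$ is a joint minimiser if and only if each $r_t$ minimises $K_t\,g(\cdot)$ over $I$. The first step is to state and justify this equivalence. One direction is immediate from the sum. For the other, if some coordinate were not optimal, replacing it by a per-coordinate minimiser would strictly lower the total while leaving the other terms unchanged. The same argument shows that any joint minimiser must consist of per-coordinate minimisers, which is what gives uniqueness later.

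The second step removes $K_t$. Since $K_t>0$, the map $x\mapsto K_t x$ is strictly increasing, so $\argmin_{r\in I} K_t\,g(r)=\argmin_{r\in I} g(r)$ as sets. This is the exact sense in which complexity is only a scalar multiplier: it rescales the objective but leaves the minimiser set unchanged.

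The third step treats the cases.
\begin{itemize}
\item If $g$ has a unique minimiser $r^\star\in I$, every coordinate's argmin is $\{r^\star\}$. By the first step the unique joint optimiser is $r_t^\star\equiv r^\star$.
\item If $g=D$ is strictly decreasing on compact $I=[r_{\min},r_{\max}]$, then $D(r)>D(r_{\max})$ for every $r<r_{\max}$. Hence $r_{\max}$ is the unique per-coordinate minimiser, and the optimiser is $r_t^\star\equiv r_{\max}$.
\item If $I$ is unbounded above and $D$ is strictly decreasing, then for any candidate $r$ and any $t$, picking $r'_t>r_t$ in $I$ strictly lowers $K_tD$. So no minimiser exists, and the problem has no finite optimum.
\end{itemize}
If $I$ is unbounded but not closed at the top (e.g.\ open), the same strict-improvement argument applies, so I would phrase this last case as ``$\sup I\notin I$''.

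The final clause then follows. In every case where an optimum exists it is constant in $t$, whatever the $K_t$ are, so non-uniformity cannot arise from scalar complexity alone.

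The main obstacle is not technical but one of scoping. I have to make explicit that ``dynamic'' means non-constant in $t$, and that the result depends on the same $g$ applying at every step and on the absence of the budget $\mathbb{E}[r]=\Ebar$. I would close by remarking that imposing the budget couples the coordinates through a Lagrange multiplier, $K_tD'(r_t)=\lambda$. Under that condition $K_t$ does shift the optimum, which motivates the coupled analysis that follows.
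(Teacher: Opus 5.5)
Your proposal is correct and follows the paper's proof. Like the paper, you split the unconstrained problem into per-step subproblems as in \eqref{eq:decouple}, use that multiplication by $K_t>0$ preserves the argmin, and read off the cases. Your explicit argument that a joint minimiser must consist of per-coordinate minimisers (which gives uniqueness), and your sharpening of the unbounded case to $\sup I\notin I$, are small refinements the paper leaves implicit.
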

\begin{proof}See \Cref{app:proof-scalar}.\end{proof}

\emph{Intuition.} A positive scalar $K_t$ rescales the loss surface but does not
move its argmin. Thus an unconstrained common-loss objective either selects the
same representation-aware optimum everywhere, or, for pure decreasing
rate--distortion, runs to the same upper rate boundary everywhere.

\begin{corollary}[Necessity of coupling or landscape variation]
\label{cor:coupling}
A dynamic scheme can have a non-constant optimum only when at least one of the
following makes the optimal rate depend on $t$:
\begin{enumerate}
  \item \emph{Budget coupling:} an explicit total-information constraint
    forces time steps into a shared resource problem; the KKT condition
    $K_{t}D'(r_{t}^{\star})+\lambda=0$ then yields a $K_{t}$-dependent
    optimum (see \Cref{thm:kkt}).
  \item \emph{Time-varying loss landscape:} different time steps have
    genuinely different $\ell_{t}(r)$, so per-step optima differ
    independently of coupling.
  \item \emph{Additional structural constraints:} token limits, boundary
    constraints, nonlocal dependencies, or computational costs that create
    further coupling.
\end{enumerate}
Budget coupling alone (decreasing convex $D$, the active budget) suffices for
the optimal allocation to depend on $K_{t}$; however, if the uniform
operating point already equals the unique minimiser $r^{\star}$,
\Cref{prop:trap} shows dynamic patching cannot improve.
\end{corollary}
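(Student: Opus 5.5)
The plan is to argue by contrapositive: I will show that if none of the three mechanisms is present, the optimum is forced to be constant, and then I will verify separately the sufficiency claim for budget coupling. The absence of all three mechanisms means three things. First, every step carries the same loss $\ell_t\equiv g$ up to the positive weight $K_t$. Second, there is no constraint linking different $r_t$ beyond the per-step box $r_t\in I$. Third, there are no extra structural costs. The problem is then exactly the decoupled problem \eqref{eq:decouple}, since its feasible set is the product $I^T$ and the objective is additively separable. \Cref{thm:scalar} applies directly and gives $r_t^\star=r^\star$ for every $t$ (unique interior minimiser), or $r_t^\star=r_{\max}$ for every $t$ (strictly decreasing $D$ on a compact interval), or no optimum at all. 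In every case a non-constant optimum is excluded. So any non-constant optimum must break either separability (items 1 and 3) or the common-landscape hypothesis (item 2), which establishes the necessity part of the enumeration.

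For the sufficiency remark on budget coupling, I will work in $\mathcal{F}(\Ebar,I)$ with $D'<0$ and $D''>0$, and assume the optimum lies in the interior of $I$. Stationarity of the Lagrangian $\mathbb{E}[K D(r)]+\lambda(\mathbb{E}[r]-\Ebar)$ gives $K_tD'(r_t^\star)+\lambda=0$; this is the KKT statement deferred to \Cref{thm:kkt}, which I take as given. Because $D'$ is strictly increasing, it is invertible, so $r_t^\star=(D')^{-1}(-\lambda/K_t)$. Here $\lambda>0$, because $D'<0$ makes the budget active. The map $K_t\mapsto r_t^\star$ is therefore strictly increasing. Consequently the optimum is non-constant exactly when $K_t$ is non-constant, which shows that coupling alone suffices for $K_t$-dependence.

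The final clause should just invoke \Cref{prop:trap}. Still, I will indicate why it holds. If the uniform point $\Ebar$ equals the unique minimiser $r^\star$ of $D$ on $I$, then $D(r_t)\ge D(\Ebar)$ pointwise. Multiplying by $K_t\ge0$ and averaging gives $\Jdyn(r)\ge\Juni$, so $\Delta_D(r)\le 0$ for every feasible $r$. Note that this case has $D'(\Ebar)=0$, so it is the representation-aware regime and not the pure decreasing one.

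The main obstacle is making item 3 precise, since ``structural constraints'' is informal. My first attempt is to define the unconstrained common-landscape regime as the situation where the feasible set is a product $I^T$ and the objective is $\sum_t K_t g(r_t)$. Item 3 then becomes ``anything that destroys the product structure or separability, other than a single linear budget,'' and the necessity argument reduces cleanly to \Cref{thm:scalar}. A second subtlety is that boundary optima can arise under coupling when the interior KKT solution exits $I$. I would handle these with clipped multipliers, and they do not affect the monotone dependence on $K_t$.
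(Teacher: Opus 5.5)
Your proposal is correct and follows the paper's own route. Necessity comes from reducing the uncoupled, common-landscape case to the separable problem \eqref{eq:decouple} and invoking \Cref{thm:scalar}, and sufficiency of budget coupling comes from the KKT monotonicity of \Cref{thm:kkt}. For the final clause you use the exact pointwise bound of \Cref{thm:trap-global} rather than the leading-order expansion of \Cref{prop:trap} that the statement cites; this is, if anything, cleaner. Your remark that $\Ebar=r^\star$ forces $D'(\Ebar)=0$ correctly places that clause in the representation-aware regime with $\LL$, not under the strictly decreasing $D$ of the sufficiency claim.
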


The natural coupling is a fixed total information budget, which we make
explicit.

\subsection{Dynamic allocation under coupling}
\label{sec:coupling}

\begin{definition}[Budget constraint]
\label{def:budget}
Once $N$ and $T$ are fixed, every patching scheme spends the same total
information: $\sum_{t} r_{t} = NC$. The mean bitrate
$\Ebar := \mathbb{E}_{t}[r_{t}] = NC/T$ is therefore pinned across both
uniform and dynamic schemes.
\end{definition}

The budget turns the global problem into competitive resource allocation:
extra bitrate at one step must be borrowed from another.
The uniform baseline fixes $r_t\equiv\Ebar$ and incurs $\Juni=\bar K D(\Ebar)$.
Under the budget constraint, the optimal continuous allocation has the KKT
characterisation in \Cref{thm:kkt}: at an interior optimum,
$K_tD'(r_t^\star)+\lambda=0$ for some $\lambda>0$, so higher-complexity steps
receive higher bitrate. The budget links all time steps through $\lambda$,
confirming condition~(1) of \Cref{cor:coupling}.

\subsection{The baseline tax: an unaligned scheme cannot improve}
\label{sec:tax}

Even with the budget, volatility alone is never beneficial:
redistributing bitrate around $\Ebar$ without any awareness of complexity
strictly hurts.

\begin{proposition}[Unaligned allocation is suboptimal]
\label{prop:unaligned}
If $r_t$ is statistically independent of $K_t$, $\mathbb{E}[r]=\Ebar$, and $D$
is convex, then $\Jdyn(r)\geq \Juni$. If $D$ is strictly convex on the support of
$r$, equality holds iff $r_t\equiv\Ebar$ almost surely.
\end{proposition}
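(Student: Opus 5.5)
The plan is to factor the expectation using independence and then apply Jensen's inequality to the rate marginal. Here $\mathbb{E}$ is the empirical average over $t$. I read ``statistically independent'' as independence of $K$ and $r$ under the uniform distribution on $t$, so the joint empirical distribution of $(K_t,r_t)$ is the product of its marginals. Under that reading,
\[
\Jdyn(r)=\mathbb{E}[K\,D(r)]=\mathbb{E}[K]\,\mathbb{E}[D(r)]=\bar K\,\mathbb{E}[D(r)].
\]
The product of expectations is legitimate because $T$ is finite, so all averages are finite sums.

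For the inequality, $D$ is convex on $I$ and $\mathbb{E}[r]=\Ebar\in I$, since $I$ is an interval and averages of its points stay in it. Jensen's inequality then gives $\mathbb{E}[D(r)]\ge D(\Ebar)$. Multiplying by $\bar K>0$ yields $\Jdyn(r)\ge \bar K D(\Ebar)=\Juni$, that is, $\Delta_D(r)\le 0$.

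For the equality case, the factorization shows that $\Jdyn(r)=\Juni$ iff $\mathbb{E}[D(r)]=D(\Ebar)$; here $\bar K>0$ is exactly what is needed. One direction is immediate: if $r_t\equiv\Ebar$, then both sides equal $\bar K D(\Ebar)$. For the other, I will show that Jensen is strict when $D$ is strictly convex on the support of $r$ and $r$ is non-degenerate.

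\begin{itemize}
\item First, I will take the supporting line $\ell(x)=D(\Ebar)+s(x-\Ebar)$, with $s$ a subgradient at $\Ebar$. Then $D\ge\ell$, and $\mathbb{E}[\ell(r)]=D(\Ebar)$.
\item Next, strict convexity gives $D(x)>\ell(x)$ for every $x\neq\Ebar$ in the support.
\item Finally, if $r$ puts positive mass on some value different from $\Ebar$ (a positive fraction of the time steps), then $\mathbb{E}[D(r)-\ell(r)]>0$, which is the strict inequality.
\end{itemize}

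The main obstacle is this equality step when $\Ebar$ sits at an endpoint of $I$ or the support is only finitely many points, because a subgradient must exist at $\Ebar$. The endpoint case is trivial: $\mathbb{E}[r]=\Ebar$ at an endpoint forces $r\equiv\Ebar$. At an interior point, a convex function on an open interval always has a subgradient, so the argument goes through. I will also state explicitly that ``almost surely'' means for every $t$ under the empirical measure.
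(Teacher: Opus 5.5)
Your proposal is correct and follows the paper's route. You factor $\Jdyn(r)=\bar K\,\mathbb{E}[D(r)]$ using independence, apply Jensen to get the inequality, and invoke strict Jensen for the equality case; your supporting-line argument and endpoint remark fill in the strict-Jensen step that the paper asserts without detail (it uses strict convexity on the segment between $\Ebar$ and each support point, i.e.\ on the convex hull of the support, which is the natural reading of the hypothesis).
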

\begin{proof}See \Cref{app:proof-unaligned}.\end{proof}

\emph{Intuition.} Every dynamic scheme pays a \emph{Jensen tax} on volatility before any alignment benefit accrues; a useful scheme must correlate $r$ with $K$ strongly enough to recoup that tax.

Under a purely decreasing $D$ absent representation costs, an improving aligned allocation always exists (\Cref{app:proof-local}); the practical difficulty arises from feasibility constraints, estimation noise, and the optimality trap introduced below.

\subsection{The exact threshold}
\label{sec:threshold}

How much alignment is needed? Without any approximation, the condition under
which dynamic patching outperforms uniform patching can be written as a single
covariance inequality.

\begin{theorem}[Exact threshold]
\label{thm:threshold}
Regard $t$ as a uniform random index, and let $K = K_{t}$ and $r = r_{t}$ with
$\mathbb{E}[r] = \Ebar$. Then $\Jdyn(r) < \Juni$ if and only if
\begin{equation}
\label{eq:exact-threshold}
\underbrace{-\Cov\!\bigl(K, D(r)\bigr)}_{\text{alignment gain}}
\;>\;
\mathbb{E}[K]\,\underbrace{\bigl(\mathbb{E}[D(r)] - D(\Ebar)\bigr)}_{\text{Jensen penalty}}.
\end{equation}
\end{theorem}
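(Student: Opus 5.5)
The proof is a direct algebraic decomposition of $\Jdyn(r)$, with no approximation and no use of convexity, since the statement is an exact equivalence. Throughout, $t$ is a uniform random index on $\{0,\ldots,T-1\}$, so $\mathbb{E}$ is a finite average. All moments below are therefore finite and the covariance identity holds without integrability caveats.

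First, apply the covariance identity $\mathbb{E}[XY]=\mathbb{E}[X]\,\mathbb{E}[Y]+\Cov(X,Y)$ with $X=K$ and $Y=D(r)$. This gives
\begin{equation*}
\Jdyn(r)=\mathbb{E}[K\,D(r)]=\bar K\,\mathbb{E}[D(r)]+\Cov\bigl(K,D(r)\bigr).
\end{equation*}
Next, subtract $\Juni=\bar K\,D(\Ebar)$ from both sides to obtain
\begin{equation*}
\Jdyn(r)-\Juni=\bar K\bigl(\mathbb{E}[D(r)]-D(\Ebar)\bigr)+\Cov\bigl(K,D(r)\bigr).
\end{equation*}
The condition $\Jdyn(r)<\Juni$ says exactly that the right-hand side is negative. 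Moving the covariance term to the other side gives \eqref{eq:exact-threshold}, and every step is an equivalence, so the ``if and only if'' follows.

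The budget condition $\mathbb{E}[r]=\Ebar$ plays only one role: it makes $D(\Ebar)$ the uniform baseline's loss, so the bracket is the Jensen gap. By convexity, Jensen's inequality gives $\mathbb{E}[D(r)]\ge D(\Ebar)$, so the right-hand side of \eqref{eq:exact-threshold} is nonnegative. This justifies calling it a penalty, and it recovers \Cref{prop:unaligned} as the special case where independence forces the covariance to vanish.

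There is no genuine obstacle here. The only care needed is the bookkeeping of signs: the alignment term appears as $-\Cov(K,D(r))$, which is positive when high-$K$ steps receive low distortion.
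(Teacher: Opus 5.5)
Your proof is correct and follows the paper's argument: apply the covariance identity $\mathbb{E}[K D(r)] = \Cov(K,D(r)) + \mathbb{E}[K]\,\mathbb{E}[D(r)]$, subtract $\Juni = \bar K D(\Ebar)$, and rearrange, with every step an equivalence. Your extra remark on the sign of the Jensen penalty is also right; it uses the paper's standing assumption $K_t \ge 0$ together with convexity of $D$.
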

\begin{proof}See \Cref{app:proof-threshold}.\end{proof}

\emph{Intuition.} The condition is exact and estimator-friendly: both sides can be computed from samples of $(K, r)$ without committing to a Taylor expansion.

\subsection{Upper bound under quadratic approximation}
\label{sec:upper}

The exact threshold tells us \emph{when} dynamic patching helps. A local Taylor
expansion explains \emph{how much} gain is available near a uniform operating
point, but only as a surrogate.

The exact second-order expansion with remainder is given in
\Cref{prop:taylor}. Its small-perturbation consequence is that dynamic patching
is locally beneficial only if
$-D'(\Ebar)\Cov(K,r)>\tfrac12D''(\Ebar)\mathbb{E}[K(r-\Ebar)^2]$.
If one further uses the approximation $\Cov(K,\delta^2)\approx0$ (e.g., when
the skewness of $K$ is negligible, so $K-\bar K$ is approximately uncorrelated
with $(r-\bar r)^2$), this becomes
\begin{equation}
\label{eq:taylor-threshold}
\underbrace{-D'(\Ebar)\,\Cov(K, r)}_{\text{alignment gain (linear)}}
\;>\;
\underbrace{\tfrac{1}{2}\,\bar{K}\,D''(\Ebar)\,\Var(r)}_{\text{Jensen tax (quadratic)}}.
\end{equation}
This is a local approximation, not an exact condition. Its right-hand side is
pinned by the curvature $D''(\Ebar)$ and the volatility of the allocation;
neither can be eliminated without removing the candidate dynamic scheme.

\begin{theorem}[Maximum of the quadratic surrogate]
\label{thm:delta-max}
Let $a=-D'(\Ebar)>0$, $b=D''(\Ebar)>0$, and
$\rho=\Corr(K,r)$, treated as fixed. The local surrogate
$a\rho\sigma_K\sigma_r-\tfrac12\bar K b\sigma_r^2$ has supremum
\begin{equation}
\label{eq:delta-max-body}
\Delta_{\max}^{\text{quad}}
\;=\;
\begin{cases}
\dfrac{\bigl(-D'(\Ebar)\bigr)^2\rho^2\sigma_K^2}{2\bar K D''(\Ebar)}, & \rho>0,\\[0.8em]
0, & \rho\le 0,
\end{cases}
\end{equation}
with maximiser
$\sigma_r^\star=(-D'(\Ebar))\rho\sigma_K/(\bar K D''(\Ebar))$ when $\rho>0$.
\end{theorem}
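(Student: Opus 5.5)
The plan is to treat the surrogate as a one-variable concave quadratic in $\sigma_r\ge 0$, with $\rho$, $\sigma_K$, $\bar K$, $a$, $b$ held fixed, and maximise it directly. First I record where it comes from. Starting from \Cref{eq:taylor-threshold}, the local gain is approximately $-D'(\Ebar)\Cov(K,r)-\tfrac12\bar K D''(\Ebar)\Var(r)$. Substituting $\Cov(K,r)=\rho\,\sigma_K\sigma_r$ and $\Var(r)=\sigma_r^2$ gives
\[
f(\sigma_r)=a\rho\sigma_K\,\sigma_r-\tfrac12\bar K b\,\sigma_r^2,\qquad \sigma_r\in[0,\infty).
\]
The only admissible variable is $\sigma_r\ge0$: a standard deviation cannot be negative, and $\sigma_r=0$ is the uniform allocation with $f(0)=0$.

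Next I split into cases on the sign of $\rho$. Since $\bar K>0$ and $b>0$, the function $f$ is strictly concave, with vertex at $\sigma_r^\star=a\rho\sigma_K/(\bar K b)$.

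\emph{Case $\rho>0$.} If also $\sigma_K>0$, the vertex is positive and hence feasible. Substituting it in gives
\[
f(\sigma_r^\star)=\frac{a^2\rho^2\sigma_K^2}{\bar K b}-\frac{a^2\rho^2\sigma_K^2}{2\bar K b}=\frac{a^2\rho^2\sigma_K^2}{2\bar K b},
\]
which is the stated formula. Concavity makes this a maximum, not merely a supremum, attained at the stated maximiser. If instead $\sigma_K=0$, the correlation is not really defined, but the formula returns $0$ and the vertex sits at $0$, so the statement stays consistent.

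\emph{Case $\rho\le0$.} Here the linear coefficient $a\rho\sigma_K\le0$ and the quadratic term is $\le0$. So $f(\sigma_r)\le0=f(0)$ for every $\sigma_r\ge0$, and the supremum $0$ is attained by the uniform allocation.

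The main obstacle is not the algebra but justifying the optimisation domain. One might worry that $\rho$ and $\sigma_r$ cannot vary independently under the budget and interval constraints of \Cref{def:objective}. I would handle this by stating explicitly that the theorem concerns the surrogate with $\rho$ treated as fixed, as the statement says. Scaling a fixed centred perturbation $r-\Ebar$ by $s\ge0$ leaves $\rho$ unchanged, keeps $\mathbb{E}[r]=\Ebar$, and sweeps $\sigma_r$ over $[0,\infty)$. Restricting to an admissible interval $I$ would cap $\sigma_r$ and only lower the supremum, so the displayed value remains an upper bound.
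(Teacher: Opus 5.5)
Your proposal is correct and follows essentially the same argument as the paper. You substitute $\Cov(K,r)=\rho\sigma_K\sigma_r$ and $\Var(r)=\sigma_r^2$, treat the surrogate as a concave quadratic in $\sigma_r\ge 0$, take the vertex when $\rho>0$, and take the boundary point $\sigma_r=0$ when $\rho\le 0$; your remarks on the $\sigma_K=0$ edge case and on realising any $\sigma_r$ at fixed $\rho$ by rescaling a centred perturbation are harmless additions the paper leaves implicit.
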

\begin{proof}
The surrogate is a concave quadratic in $\sigma_{r}$ at fixed $\rho,\sigma_{K}$;
read off its maximiser. See \Cref{app:proof-delta-max}.
\end{proof}

Expression \eqref{eq:delta-max-body} is therefore not an exact upper
bound on dynamic-patching gain; it is the optimum of a second-order local
surrogate under fixed-correlation, small-perturbation, and feasibility-relaxation
assumptions. Still, the shape is informative: the gain decays
quadratically with misalignment, small $\sigma_K^2$ offers little headroom, and
large curvature $D''(\Ebar)$ makes the Jensen tax expensive.

\paragraph{Rigorous global bound.}
\Cref{thm:bound} gives the corresponding global upper bound under strong
convexity and a Lipschitz condition on $D$, without any small-$\delta$
assumption. The local surrogate and the global bound share the same qualitative message: the
achievable improvement is a fragile second-order quantity requiring high
complexity variance, accurate alignment, and a flat distortion curve.

\subsection{The representation-aware regime and the optimality trap}
\label{sec:trap}

Rate--distortion theory alone does not capture the full objective of a
time-series Transformer. Patching also determines the \emph{representation} the
network sees: each patch is a token, and overly short tokens fragment context,
while overly long tokens dilute semantics. We model this with a U-shaped
correction term, so the network sees
\begin{equation}
\label{eq:representation-aware}
\LL(r) \;:=\; D(r) + \lambda\,\Dsem(r),
\qquad \lambda \geq 0,
\end{equation}
where $\Dsem$ summarises fragmentation and redundancy costs. Rather than deriving a unique
minimum from componentwise shape assumptions, we assume it explicitly below.
Variable-resolution tokenisation~\citep{dehghani2023navit,ronen2023apsvit} and
adaptive or length-variable time-series patching~\citep{ding2026timemosaic,huang2024hdmixer}
can be read as different ways to traverse this U-shape.

\begin{assumption}[Representation-aware optimum]
\label{ass:unique}
$\LL\in C^2(I)$, $\LL''(r)>0$ on $I$, and there exists $r^\star\in I$ with
$\LL'(r^\star)=0$. Hence $r^\star$ is the unique global minimiser of $\LL$.
\end{assumption}

All decomposition and threshold statements above hold with $D$ replaced by
$\LL$. Two additional facts matter under this objective. First,
\Cref{thm:scalar} applies with $g=\LL$: in the unconstrained regime the
optimum is the single point $r^\star=\argmin_r\LL(r)$, regardless of $K_t$.
Second, under the budget constraint, the very tunability of the uniform baseline
closes the door opened in \Cref{sec:threshold}.
Here $r^{\star}$ denotes the scalar representation-aware operating point induced
by patch-size tuning, not a global optimum over neural-network parameters.
Write $\mathcal{J}_{\LL}(r):=\mathbb{E}[K\LL(r)]$ and
$\mathcal{J}_{\LL}^{\mathrm{unif}}:=\bar K\LL(\Ebar)$.

\begin{theorem}[Uniform optimality at the representation-aware minimiser]
\label{thm:trap-global}
Assume $K_{t}\geq 0$ and that $\LL$ has a global minimiser $r^{\star}$.
If $\Ebar = r^{\star}$, then $\mathcal{J}_{\LL}(r)\geq \mathcal{J}_{\LL}^{\mathrm{unif}}$
for \emph{every} feasible allocation. Under \Cref{ass:unique} with $K_{t}>0$,
equality holds only at $r_{t}\equiv r^{\star}$.
\end{theorem}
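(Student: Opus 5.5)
The argument is pointwise and needs no convexity. If $\Ebar=r^\star$, then the uniform benchmark is $\mathcal{J}_{\LL}^{\mathrm{unif}}=\bar K\,\LL(r^\star)=\mathbb{E}[K]\,\LL(r^\star)$.

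Take any feasible allocation $r\in\mathcal{F}(\Ebar,I)$. Since $r^\star$ is a global minimiser of $\LL$, we have $\LL(r_t)\ge\LL(r^\star)$ at every step $t$. Multiplying by $K_t\ge0$ preserves the inequality, so $K_t\LL(r_t)\ge K_t\LL(r^\star)$ for each $t$. Averaging over $t$ then gives
\[
\mathcal{J}_{\LL}(r)=\mathbb{E}[K\LL(r)]\;\ge\;\mathbb{E}[K]\,\LL(r^\star)=\mathcal{J}_{\LL}^{\mathrm{unif}}.
\]
The budget constraint $\mathbb{E}[r]=\Ebar$ is used only to make the uniform allocation $r_t\equiv r^\star$ the budget-matched comparator. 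The inequality itself holds for every $r\in I^T$, so it holds a fortiori on the feasible set. This is exactly the mechanism of \Cref{thm:scalar}: the common landscape has a single argmin, and the weights $K_t$ cannot move it.

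For the equality case, work under \Cref{ass:unique} with $K_t>0$. Strict convexity together with $\LL'(r^\star)=0$ makes $r^\star$ the unique global minimiser. Hence $\LL(r_t)>\LL(r^\star)$ whenever $r_t\ne r^\star$. Suppose $\mathcal{J}_{\LL}(r)=\mathcal{J}_{\LL}^{\mathrm{unif}}$. Then the average of the nonnegative terms $K_t\bigl(\LL(r_t)-\LL(r^\star)\bigr)$ is zero, so each term must vanish. Because $K_t>0$, each factor $\LL(r_t)-\LL(r^\star)$ is zero, which forces $r_t=r^\star$ for all $t$. Conversely, $r_t\equiv r^\star$ trivially attains equality.

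The only point needing care is the equality statement. It requires $K_t>0$ at every step: a step with $K_t=0$ contributes zero regardless of $r_t$, so $r_t$ would be unconstrained there. It also requires uniqueness of the minimiser, which \Cref{ass:unique} supplies. Since $\mathbb{E}$ is a finite empirical average over $T$ steps, "almost surely" reduces to "for all $t$", and no measure-theoretic issue arises.
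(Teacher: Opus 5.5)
Your proof is correct and follows the paper's argument: the pointwise bound $\LL(r_t)\ge\LL(r^\star)$, weighted by $K_t\ge 0$ and averaged, gives the inequality. For equality, each nonnegative term $K_t\bigl(\LL(r_t)-\LL(r^\star)\bigr)$ must vanish, and $K_t>0$ together with the uniqueness of the minimiser from \Cref{ass:unique} forces $r_t=r^\star$ for every $t$.
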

\begin{proof}
$\LL(r_{t})\geq \LL(r^{\star})$ pointwise; multiply by $K_{t}\geq 0$ and sum.
See \Cref{app:proof-trap-global} for the strict-inequality argument.
\end{proof}

\Cref{thm:trap-global} is the global statement. The local picture, obtained by
expanding $\LL$ around $\Ebar$ (full derivation in \Cref{app:proof-trap}), is
more informative.

\begin{proposition}[Optimality trap]
\label{prop:trap}
Let $\delta_{t} = r_{t}-\Ebar$ and assume $\LL\in C^{3}$. Then
\begin{equation}
\label{eq:trap}
\Delta_{\LL}(r)
\;=\;
\underbrace{-\LL'(\Ebar)\,\Cov(K,r)}_{\text{first-order alignment}}
\;-\;
\underbrace{\tfrac{1}{2}\,\LL''(\Ebar)\,\mathbb{E}[K\delta^{2}]}_{\text{Jensen tax}}
\;-\;\mathbb{E}[K\,R_{\LL,3}(\delta)].
\end{equation}
If hyperparameter tuning has placed the uniform baseline near the
minimiser, $\Ebar\approx r^{\star}$, then $\LL'(\Ebar)\approx 0$ and the
leading non-zero term $-\tfrac{1}{2}\LL''(\Ebar)\,\mathbb{E}[K\delta^{2}]\leq 0$
is non-positive: any allocation with $\Var(r)>0$ pays the Jensen tax with no
first-order alignment gain to offset it.
\end{proposition}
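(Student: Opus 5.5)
I will prove the identity \eqref{eq:trap} by a pointwise Taylor expansion of $\LL$ around the uniform operating point $\Ebar$, followed by averaging against the weights $K_t$. Since $\LL\in C^{3}$, for each $t$ I write
\[
\LL(r_t)=\LL(\Ebar)+\LL'(\Ebar)\,\delta_t+\tfrac12\LL''(\Ebar)\,\delta_t^{2}+R_{\LL,3}(\delta_t),
\]
where $R_{\LL,3}(\delta)=\tfrac16\LL'''(\xi_t)\delta^{3}$ for some $\xi_t$ between $\Ebar$ and $r_t$. This is the Lagrange form, and the integral form works equally well. I then define $R_{\LL,3}$ by exactly this equation, so the expansion holds with no approximation.

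Next I multiply by $K_t$ and take the empirical average $\mathbb{E}$ over $t$. This gives
\[
\mathcal{J}_{\LL}(r)=\bar K\LL(\Ebar)+\LL'(\Ebar)\,\mathbb{E}[K\delta]+\tfrac12\LL''(\Ebar)\,\mathbb{E}[K\delta^{2}]+\mathbb{E}[K R_{\LL,3}(\delta)].
\]
The key step is to rewrite the linear term using the budget constraint of \Cref{def:budget}. Because $\mathbb{E}[\delta]=0$, we have $\mathbb{E}[K\delta]=\mathbb{E}[(K-\bar K)\delta]=\Cov(K,r)$. Subtracting $\mathcal{J}_{\LL}(r)$ from $\mathcal{J}_{\LL}^{\mathrm{unif}}=\bar K\LL(\Ebar)$ yields \eqref{eq:trap} term by term, with $\Delta_{\LL}(r):=\mathcal{J}_{\LL}^{\mathrm{unif}}-\mathcal{J}_{\LL}(r)$ as in \Cref{def:objective}.

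For the qualitative claim, I use \Cref{ass:unique}. It gives $\LL''>0$ and $\LL'(r^\star)=0$, so continuity of $\LL'$ yields $\LL'(\Ebar)\to0$ as $\Ebar\to r^\star$. More quantitatively, $|\LL'(\Ebar)|\le \sup|\LL''|\cdot|\Ebar-r^\star|$. Since $K_t\ge0$ and $\LL''(\Ebar)>0$, the Jensen tax $\tfrac12\LL''(\Ebar)\mathbb{E}[K\delta^2]$ is nonnegative. It is strictly positive whenever $\Var(r)>0$ and $K_t>0$, because then $\delta_t\ne0$ at some $t$.

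The main obstacle is stating precisely in what sense the Jensen tax is the \emph{leading} term. This requires the cubic remainder to be dominated by the quadratic term. I would use the bound $|\mathbb{E}[K R_{\LL,3}(\delta)]|\le \tfrac16 M_3\,\max_t|\delta_t|\,\mathbb{E}[K\delta^2]$, where $M_3=\sup_I|\LL'''|$. Under small perturbations, meaning $\max_t|\delta_t|<3\LL''(\Ebar)/M_3$, the sum of the second- and third-order terms is then strictly negative. At exactly $\Ebar=r^\star$, this gives $\Delta_{\LL}(r)<0$ for every nonconstant allocation, consistent with \Cref{thm:trap-global}.

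Near $r^\star$, the first-order term is bounded by $|\LL'(\Ebar)|\,\sigma_K\sigma_r$ via Cauchy--Schwarz. It therefore vanishes as $\Ebar\to r^\star$, which leaves the non-positive Jensen tax as the leading non-zero contribution.
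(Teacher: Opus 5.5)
Your proposal is correct and follows the paper's own proof. The paper obtains \eqref{eq:trap} by applying the exact expansion of \Cref{prop:taylor} with $D$ replaced by $\LL$: a pointwise Taylor expansion about $\Ebar$, weighting by $K_t$, and $\mathbb{E}[\delta]=0\Rightarrow\mathbb{E}[K\delta]=\Cov(K,r)$. It then argues, as you do, from $\LL'(r^\star)=0$, $\LL''>0$ and $\mathbb{E}[K\delta^2]\ge 0$. Your explicit domination condition $\max_t|\delta_t|<3\LL''(\Ebar)/M_3$ and the bound $|\LL'(\Ebar)|\,\sigma_K\sigma_r$ on the alignment term sharpen the paper's informal ``$\approx 0$'' reading, but they only make the Jensen tax dominate for a fixed perturbation as $\Ebar\to r^\star$. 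For $\Ebar\neq r^\star$, sufficiently small aligned perturbations still gain to first order, as in \Cref{prop:local} applied to $\LL$.
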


\emph{Intuition.} Uniform tuning brings $\Ebar$ to where the loss is flat; since $\LL''(\Ebar)=D''(\Ebar)+\lambda\Dsem''(\Ebar)>D''(\Ebar)$, the representation correction adds curvature and makes the Jensen tax strictly larger than under $D$ alone.

\subsection{Summary}
\label{sec:theory-summary}

Together, these results identify a narrow region where dynamic patching can
help. Coupling is required for scalar complexity to affect rates
(\Cref{thm:scalar}); under a budget, improvement must overcome the Jensen tax
through alignment (\Cref{thm:threshold}), with second-order headroom controlled
by alignment, complexity variance, and curvature
(\Cref{thm:delta-max,thm:bound}). Once the uniform baseline is tuned near its
representation-aware optimum, the first-order alignment term collapses and the
Jensen tax becomes pure cost (\Cref{thm:trap-global,prop:trap}), as summarized
in \Cref{fig:concept}.

\begin{figure}[t]
  \centering
  \includegraphics[width=\linewidth]{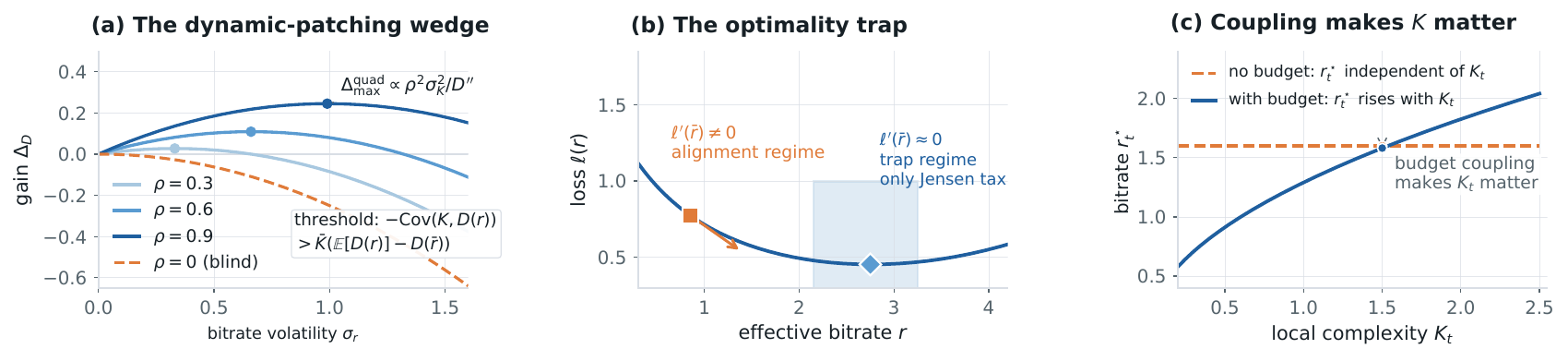}
  \caption[Theory roadmap for dynamic patching]{\textbf{Theory roadmap for dynamic patching.}
  \textbf{(a) Wedge and threshold:} per-step gain $\Delta_D$ is a concave parabola in bitrate volatility $\sigma_r$ (\Cref{thm:delta-max}); peak scales as $\rho^2\sigma_K^2/D''$ so misalignment shrinks headroom quadratically; $\rho{=}0$ (dashed) is pure Jensen tax, and improvement requires $-\mathrm{Cov}(K,D(r))>\bar K(\mathbb{E}[D(r)]-D(\bar r))$ (\Cref{thm:threshold}).
  \textbf{(b) Trap:} the representation-aware loss is U-shaped (\Cref{prop:trap}); tuning $\bar r$ into the basin of $r^\star$ (blue) leaves only the curvature-driven Jensen tax.
  \textbf{(c) Coupling:} without a budget, $r_t^\star$ is independent of $K_t$ or degenerates to the common upper boundary (scalar invariance, \Cref{thm:scalar}); a budget activates the KKT condition (\Cref{thm:kkt}) and makes the optimal allocation rise with $K_t$.}
  \label{fig:concept}
\end{figure}
% ============================================================================
% 4. COMPLEXITY BOTTLENECK
% ============================================================================
\section{Why the alignment target is itself unstable}
\label{sec:complexity}

The theoretical picture in \Cref{sec:theory} presumes that $K_{t}$ is given.
Four observations show that $K_{t}$ is, in a structural sense, unavailable in
practice.
\begin{itemize}[leftmargin=*,topsep=0.25ex,itemsep=0.2ex,parsep=0pt]
  \item \textbf{Ill-definedness.} Temporal complexity admits no universal,
    axiomatic characterisation. Candidate definitions, local entropy rate,
    conditional entropy $H(x_{t}\mid x_{<t})$, local variance, spectral energy,
    gradient magnitude, boundary probability, cosine dissimilarity, predictive
    residual, each capture a different facet of difficulty, and patching
    strategies optimised for one measure can be \emph{adversarially} aligned
    under another. Because $\Delta_{\max}^{\text{quad}}$ scales as
    $\rho(K,r)^{2}$ for a specific proxy $K$, if a scheme is designed to align
    with one complexity measure but evaluated against a different one, the
    measured gain is fragile by construction.
  \item \textbf{Circularity of estimation.} Model-residual definitions of
    $K_{t}$, e.g., $K_{t} = (x_{t} - \hat{x}_{t})^{2}$ for some reference
    forecaster, require a model strong enough to predict $x_{t}$ before patching
    is decided. If such a model exists, most of the modelling work has already
    been done and the marginal utility of aligning $r_{t}$ with $K_{t}$ is
    small. If one retreats to model-free proxies, $\rho(K, r)$ drops and
    \Cref{thm:delta-max} pushes the achievable gain towards zero.
  \item \textbf{Disproportionate cost.} A practical complexity-aligned pipeline
    must (i) commit to a definition, (ii) train or calibrate an estimator,
    (iii) run it at inference, and (iv) solve a non-uniform allocation problem.
    Steps (ii)--(iii) introduce a secondary model with its own parameters,
    training data, and latency. Against the bounded ceiling
    $\Delta_{\max}^{\text{quad}}$, this cost is rarely justifiable.
  \item \textbf{The complexity paradox.} The three obstacles compound: the
    signals that stand to gain the most from dynamic patching (high
    $\sigma_{K}$) are those for which complexity is hardest to define and most
    expensive to estimate, while the signals for which $K_{t}$ is cheap to
    obtain are those for which $\sigma_{K}$ is small and the gain vanishes.
\end{itemize}

These four obstacles identify when a proxy-routed adaptive scheme is unlikely
to clear the threshold of \eqref{eq:exact-threshold}: when the chosen $K_{t}$ is
ill-defined, when its estimator must approach the strength of the forecaster
itself, or when its overhead approaches the bounded ceiling. End-to-end boundary
optimization can sidestep the proxy step, but it falls outside the proxy-routed
setting targeted here.

% ============================================================================
% 5. EXPERIMENTS
% ============================================================================
\section{Experiments: isolating the patching mechanism}
\label{sec:experiments}

\subsection{Protocol}
\label{sec:setup-exp}

The comparison uses three methods:
\textbf{EntroPE}~\citep{abeywickrama2025entrope} (predictive-entropy
boundaries), \textbf{TimeMosaic}~\citep{ding2026timemosaic}
(temporal-heterogeneity routing), and
\textbf{HDMixer}~\citep{huang2024hdmixer} (length-extendable patching).
For each method, we build a uniform-patch counterpart by replacing the
adaptive component with one fixed patch length. Everything else is held
fixed: backbone, prediction head, optimizer, training budget, and
preprocessing (\Cref{app:model-changes}). Horizon grids,
patch-size grids, seeds, and selection rules are in \Cref{app:hyper};
\Cref{app:full-tables} gives the full per-patch sweep.
We evaluate on eight standard real-world multivariate forecasting benchmarks: ETT (ETTh1, ETTh2, ETTm1, ETTm2) and Weather~\citep{zhou2021informer}, and Electricity, Traffic, and Exchange~\citep{lai2018modeling}.

\subsection{Results}
\label{sec:main-result}

\begin{figure*}[t]
\centering
\includegraphics[width=\textwidth]{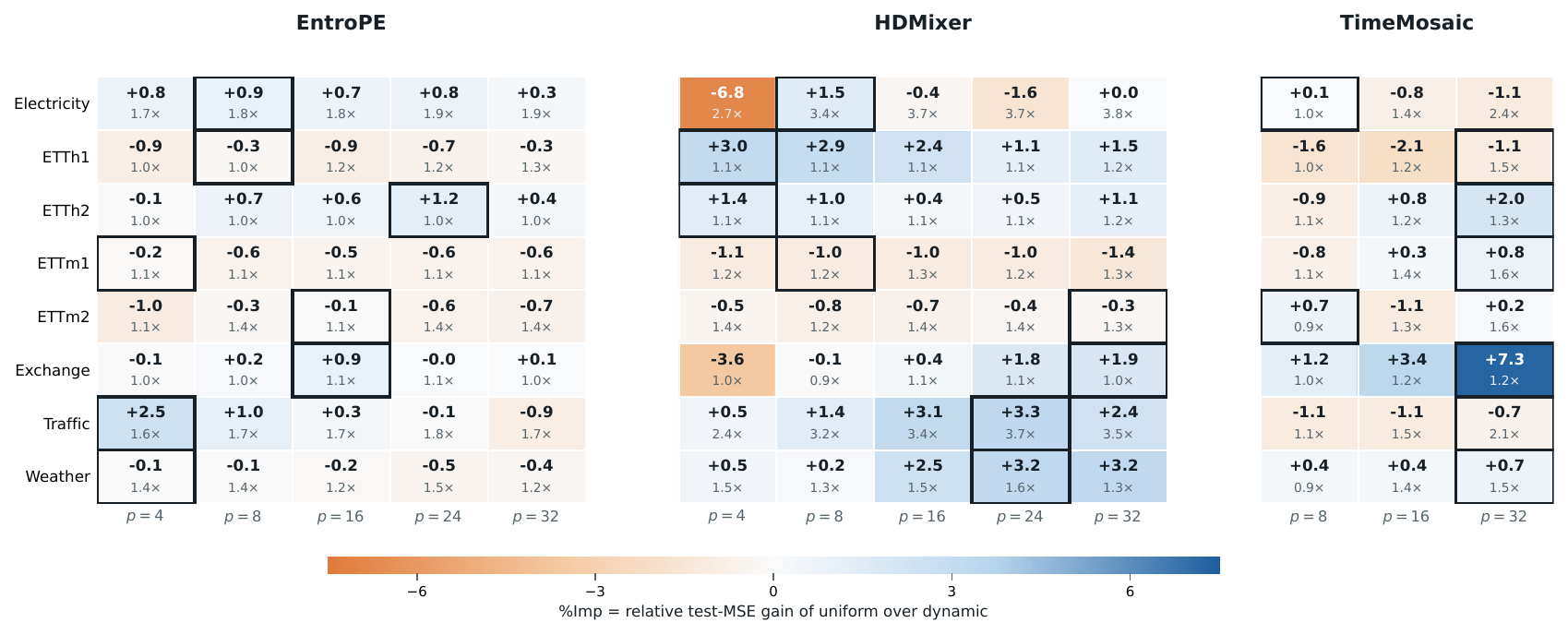}
\caption[Horizon-averaged patch sweep]{\textbf{Horizon-averaged patch sweep} (replaces Table~1). Each row is a dataset and each column is a fixed uniform patch size for that method. Cell colour encodes \%Imp, the relative test-MSE gain of the uniform variant over the matched dynamic baseline: blue means uniform is better, orange means dynamic is better. The top number is the mean \%Imp over horizons $96,192,336,720$, and the bottom number is the mean training speedup ($\times$) over the same horizons. The black border marks the best horizon-averaged \%Imp in each row.}
\label{tab:main}
\end{figure*}

\paragraph{Accuracy--efficiency trade-off.}
\Cref{tab:main} shows accuracy and efficiency trade-offs. Adaptive
patching helps only in the rows where its routing
overhead buys loss-relevant resolution that a single fixed patch cannot
approximate. That regime is present, but narrow: dynamic remains ahead in
some low-margin rows, including HDMixer on ETTm1/ETTm2 and TimeMosaic on
ETTh1/Traffic, yet the largest visible deviations usually favour a tuned
uniform patch. The clearest examples are not uniformly distributed across
methods. HDMixer's uniform variants form a broad positive band on Traffic,
Weather, ETTh1, and ETTh2, suggesting that its deformable extraction often
does not need input-dependent offsets once the patch length is retuned.
EntroPE is more conservative: only Traffic gives a clear horizon-averaged
uniform advantage ($+2.5\%$ at $p{=}4$), while most other rows sit near
parity. TimeMosaic looks different. Its main outlier is Exchange
($+7.3\%$ at $p{=}32$), which points to a dataset-specific routing
failure or patch-scale mismatch, not a general problem with its
multiscale design.

The speedup annotations sharpen this reading. Fixed patches are often
cheaper even when their MSE advantage is small, but the compute gain is not
automatically coupled to the best accuracy cell. HDMixer illustrates the
useful case: its strongest Traffic setting combines a $+3.3\%$ averaged MSE
gain with a $3.7\times$ training speedup, and Electricity still gives
$+1.5\%$ at $3.4\times$. TimeMosaic shows the cautionary case: larger fixed
patches can reach about up to $2.4\times$ speedup, but on several
datasets they mainly make the model cheaper while leaving accuracy near
parity. EntroPE has the smallest trade-off spread, with typical speedups around
$1.0\times$--$1.9\times$ and modest accuracy movement. Thus the practical
question is whether the router justifies its cost on a given dataset
once a uniform patch sweep has already captured the easy gains.

\paragraph{Theoretical reading.}
This pattern matches \Cref{thm:delta-max}. The achievable second-order gain
scales with $\rho^{2}\sigma_{K}^{2}/\bigl(\bar K\,D''(\bar r)\bigr)$.
After tuning the uniform operating point, $D''(\bar r)$ is near its local
minimum, so even a useful routing signal has little room to improve the loss.
The observed gains are concentrated in $(-2\%,+2\%)$, with extremes near
$\pm 7\%$. That pattern fits a small positive ceiling for a few aligned
(method, dataset) pairs, not a general advantage for adaptive allocation.

The row-wise maxima reinforce that reading. HDMixer benefits most from
coarser fixed patches on Traffic and Weather, EntroPE benefits mainly at the
finest evaluated granularities on Traffic and Electricity, and TimeMosaic's
largest averaged uniform advantage appears on Exchange. The result therefore do not show a universal win for either side: the averaged gap changes with the architecture and the dataset, and many rows stay close across several fixed patch sizes.

\paragraph{Mechanism check on a Transformer backbone.}
\Cref{app:continuous-rate} isolates the rate-allocation channel behind
\Cref{eq:exact-threshold} by replacing discrete patching with a continuous
noise schedule on a Transformer backbone, holding the Jensen term flat while
varying only alignment. In this controlled setting, mean gain is monotone in
target alignment across nine settings and twenty seeds per cell, from $-24\%$
at $\rho^\star=-1$ to $+16\%$ at $\rho^\star=+1$; the small gaps in
\Cref{tab:main} are therefore consistent with practical adaptive patchers
operating where alignment is too weak to overcome the Jensen tax and routing
cost.

% ============================================================================
% 7. DISCUSSION
% ============================================================================
\section{Discussion and Conclusion}
\label{sec:discussion}

Adaptive patching is a resource-allocation problem, so a non-uniform grid
helps only when the routing signal identifies locations where extra temporal
resolution reduces the downstream loss by more than the Jensen tax and the
cost of routing. Tuning a uniform patch length already absorbs most of the
available room. Time-series forecasting departs from the image-tokenization
intuition: visible detail in images often marks regions where extra tokens
help, whereas a locally irregular forecasting segment can be unpredictable
noise, a short-lived regime, or a feature whose error is already dominated by
the horizon-level objective. Observable heterogeneity helps only when it
tracks loss-relevant complexity, and the variance $\sigma_{K}^{2}$ in the
bound applies to the loss-relevant target rather than the raw input. A
plausible routing signal can therefore lose its advantage once the uniform
operating point is tuned, and the ceiling on adaptive gain scales with the
squared correlation between the routing signal and a loss-relevant complexity
target. A practical adaptive-patching claim should specify four things: the
target complexity being approximated, the routing proxy, its cost, and the
tuned uniform baseline being beaten.

\paragraph{Future Work.}
This work gives a first theoretical account of dynamic patch allocation, but the analysis is intentionally abstract and does not model every architecture-specific design choice. Future work can specialize the framework to concrete adaptive patching systems, including multiscale patchers and jointly trained routers. The alignment condition may also serve as a design principle for routing modules that allocate resolution where it is most useful.

\paragraph{Acknowledgments}
The work of Federico Zucchi has received funding from the Horizon Europe research and innovation programme, under grant agreement No 101095436 and Horizon European innovation council under grant agreement No. 190129251.

% ============================================================================
% REFERENCES
% ============================================================================
\bibliographystyle{plainnat}
\bibliography{ref}

% ============================================================================
% CHECKLIST
% ============================================================================

% \clearpage
% \input{checklist}

% \clearpage

% ============================================================================
% APPENDIX
% ============================================================================

\appendix
\raggedbottom

\section*{Appendix}

\paragraph{Structure.} \Cref{app:proofs} collects the proofs and technical
formal statements supporting the main theory. \Cref{app:stats} reports
the variability and statistical-significance analysis of the patch-sweep
results, including Wilcoxon signed-rank tests at the setting and dataset
levels, bootstrap confidence intervals, Holm correction, and the
underlying assumptions about variability. \Cref{app:full-tables}
reproduces the full per-patch-size ablation tables summarised in
\Cref{tab:main}. \Cref{app:model-changes} summarises the model
adaptations used to instantiate the uniform-patch baselines.
\Cref{app:hyper} lists experimental hyperparameters and hardware. The
appendix concludes with a self-contained continuous-rate diagnostic on
a simple Transformer backbone that isolates the rate-allocation mechanism under
a controlled noise intervention.

\section{Proofs}
\label{app:proofs}

The proofs below also include technical formal statements moved out of the main
theory section for space.

\subsection[Proof of scalar invariance]{Proof of \Cref{thm:scalar} (scalar invariance)}
\label{app:proof-scalar}

Without a coupling constraint, the global problem separates into independent
per-step subproblems, as in \Cref{eq:decouple}. For a common per-step loss $g$
and $K_t>0$,
\[
\argmin_{r_t\in I}K_t g(r_t)=\argmin_{r_t\in I}g(r_t),
\]
because multiplication by a positive scalar preserves the argmin. If $g$ has a
unique minimiser $r^\star$, each subproblem therefore selects $r_t^\star=r^\star$.
If $g=D$ is strictly decreasing on a compact interval $[r_{\min},r_{\max}]$, each
subproblem is minimised at $r_{\max}$; on an unbounded interval the decreasing
objective has no finite minimiser. Thus scalar complexity alone cannot make the
unconstrained optimum non-uniform.

\subsection[Proof of baseline tax]{Proof of \Cref{prop:unaligned} (baseline tax)}
\label{app:proof-unaligned}

Under independence, $K$ and $D(r)$ are independent, so
\[
\Jdyn(r)=\mathbb{E}[KD(r)]=\bar K\,\mathbb{E}[D(r)].
\]
Because $D$ is convex and $\mathbb{E}[r]=\Ebar$, Jensen's inequality gives
$\mathbb{E}[D(r)]\ge D(\Ebar)$, hence $\Jdyn(r)\ge \bar K D(\Ebar)=\Juni$.
If $D$ is strictly convex on the support of $r$, equality requires
$r_t\equiv\Ebar$ almost surely.

\subsection[Proof of exact threshold]{Proof of \Cref{thm:threshold} (exact threshold)}
\label{app:proof-threshold}

Treat $t$ as a uniform random index and write $K = K_{t}$ and $r = r_{t}$. The
dynamic scheme improves on the uniform baseline if and only if
\begin{equation}
\mathbb{E}[KD(r)] < \mathbb{E}[K]D(\Ebar).
\end{equation}
Using the covariance identity,
\begin{equation}
\mathbb{E}[KD(r)] = \Cov(K, D(r)) + \mathbb{E}[K]\,\mathbb{E}[D(r)].
\end{equation}
Substituting and rearranging gives
\begin{equation}
-\Cov(K, D(r)) > \mathbb{E}[K]\bigl(\mathbb{E}[D(r)] - D(\Ebar)\bigr),
\end{equation}
which is exactly \Cref{eq:exact-threshold}.

\subsection[Proof of quadratic-surrogate maximum]{Proof of \Cref{thm:delta-max} (quadratic-surrogate maximum)}
\label{app:proof-delta-max}

Using the local quadratic approximation from \Cref{eq:taylor-threshold}, define
\begin{equation}
\Delta(r)
:=
(-D'(\Ebar))\Cov(K,r)
- \tfrac{1}{2}\bar K D''(\Ebar)\Var(r).
\end{equation}
Write $\Cov(K,r)=\rho\sigma_{K}\sigma_{r}$ and $\Var(r)=\sigma_{r}^{2}$.
Then
\begin{equation}
\Delta(r)
=
(-D'(\Ebar))\rho\sigma_{K}\sigma_{r}
- \tfrac{1}{2}\bar K D''(\Ebar)\sigma_{r}^{2}.
\end{equation}
For fixed $\rho$ and $\sigma_K$, this is a concave quadratic in $\sigma_r$. If
$\rho\le0$, the linear term is non-positive for all $\sigma_r\ge0$, so the
maximum is attained at $\sigma_r=0$ with value $0$. If $\rho>0$, differentiating
with respect to $\sigma_r$ and setting to zero gives
\begin{equation}
\sigma_{r}^{\star}
=
\frac{(-D'(\Ebar))\rho\sigma_{K}}
{\bar K D''(\Ebar)}.
\end{equation}
Substituting $\sigma_{r}^{\star}$ back into $\Delta(r)$ yields
\begin{equation}
\Delta_{\max}^{\text{quad}}
\;=\;
\frac{\bigl(-D'(\Ebar)\rho\sigma_{K}\bigr)^{2}}
{2\bar K D''(\Ebar)},
\end{equation}
which is the positive-correlation case of \Cref{eq:delta-max-body}. This is the
maximum of the quadratic surrogate, not an exact gain bound for arbitrary
allocations.

\subsection[Proof of KKT characterisation]{Proof of \Cref{thm:kkt} (KKT characterisation of the optimal constrained allocation)}
\label{app:proof-kkt}

\begin{theorem}[KKT characterisation of the optimal constrained allocation]
\label{thm:kkt}
Assume $D \in C^{1}$ is strictly convex with $D'(r)<0$ for all $r$, and
$K_{t}>0$. If the minimiser $r^{\star}$ of
$\min_{r \in \mathcal{F}(\Ebar,I)} \mathbb{E}[K\,D(r)]$
is interior, there exists $\lambda>0$ such that
\begin{equation}
\label{eq:kkt}
K_{t}\,D'(r_{t}^{\star}) + \lambda = 0,
\qquad t = 0,\ldots,T{-}1.
\end{equation}
Equivalently $D'(r_{t}^{\star}) = -\lambda/K_{t}$, and the allocation is
\emph{monotone} in complexity: $K_{s} > K_{t} \Rightarrow r_{s}^{\star} > r_{t}^{\star}$.
\end{theorem}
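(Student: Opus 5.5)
The plan is to apply Lagrange/KKT conditions to the budget-constrained problem on the feasible set $\mathcal{F}(\Ebar,I)$. The only active constraint at an interior optimum is the single linear equality $\mathbb{E}[r]=\Ebar$, equivalently $\sum_t r_t = T\Ebar$. Write the Lagrangian
\[
\mathcal{L}(r,\lambda)=\frac1T\sum_{t}K_tD(r_t)+\frac{\lambda}{T}\Bigl(\sum_t r_t - T\Ebar\Bigr).
\]
Since the constraint is affine, its gradient $(1,\dots,1)$ is nonzero. Linear-independence (or Slater-type) qualification therefore holds trivially. Because $r^\star$ is interior to $I^T$, no box multipliers are active. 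Stationarity in each coordinate $r_t$ then gives $K_tD'(r_t^\star)+\lambda=0$ for a common scalar $\lambda$, which is \eqref{eq:kkt}.

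To make this concrete without invoking a general KKT theorem, I would argue directly. Take any two indices $s\neq t$ and the feasible direction $e_s-e_t$, which keeps the budget fixed and, by interiority, stays in $I^T$ for small steps of either sign. The directional derivative of $\Jdyn$ at $r^\star$ along $\pm(e_s-e_t)$ must be nonnegative, so it vanishes. This gives $K_sD'(r_s^\star)=K_tD'(r_t^\star)$ for all $s,t$. Call the common value $-\lambda$.

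The sign $\lambda>0$ follows immediately: $K_t>0$ and $D'<0$ give $-\lambda=K_tD'(r_t^\star)<0$. Dividing by $K_t$ gives the equivalent form $D'(r_t^\star)=-\lambda/K_t$.

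For monotonicity, suppose $K_s>K_t$. Then $-\lambda/K_s>-\lambda/K_t$, because $\lambda>0$ makes $-\lambda/K$ increasing in $K$. Hence $D'(r_s^\star)>D'(r_t^\star)$. Strict convexity of a $C^1$ function makes $D'$ strictly increasing, so $r_s^\star>r_t^\star$.

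The only delicate point is justifying that the first-order condition holds with a single shared multiplier, that is, that no boundary multipliers enter. This is exactly what the interiority hypothesis buys: the pairwise-exchange argument uses it to allow perturbations in both directions. I would also remark that strict convexity of $\Jdyn$ on the convex set $\mathcal{F}$ (a positive combination of strictly convex functions) makes the minimiser unique, so the characterisation is of the optimum. Conversely, by convexity these conditions are also sufficient, although the statement does not require this.
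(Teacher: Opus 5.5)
Your proposal is correct and follows essentially the same route as the paper: Lagrangian stationarity with a single shared multiplier at an interior point, $\lambda>0$ from $K_t>0$ and $D'<0$, and monotonicity because $D'$ is strictly increasing. Your pairwise-exchange argument along $e_s-e_t$ makes the multiplier step self-contained. Your appeal to ``strict convexity of a $C^1$ function implies $D'$ strictly increasing'' is also tighter than the paper's appeal to $D''>0$, which the stated $C^1$ hypothesis does not provide.
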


\begin{proof}
The constrained optimisation problem is
\begin{equation}
\min_{\{r_t\} \in \mathcal{F}(\bar r,I)}
\frac{1}{T}\sum_{t=0}^{T-1} K_t D(r_t),
\quad
\mathcal{F}(\bar r,I) := \Bigl\{\{r_t\}\in I^T : \tfrac{1}{T}\sum_{t=0}^{T-1} r_t = \bar r\Bigr\}.
\end{equation}
Form the Lagrangian
\begin{equation}
\mathcal{L}\bigl(\{r_t\},\lambda\bigr)
\;=\;
\frac{1}{T}\sum_{t=0}^{T-1} K_t D(r_t)
+ \lambda\!\left(\frac{1}{T}\sum_{t=0}^{T-1} r_t - \bar r\right).
\end{equation}
At an interior minimiser $\{r_t^{\star}\}$, differentiating $\mathcal{L}$ with respect to
$r_t$ and setting the result to zero gives
\begin{equation}
\frac{1}{T}\bigl(K_t\,D'(r_t^{\star}) + \lambda\bigr) = 0,
\qquad t = 0,\ldots,T{-}1,
\end{equation}
which is \eqref{eq:kkt}.  Since $D'(r) < 0$ for all $r$ and $K_t > 0$, the
equation $K_t D'(r_t^{\star}) = -\lambda$ can only hold with $\lambda > 0$.
Rearranging gives $D'(r_t^{\star}) = -\lambda/K_t$.  Because $D''(r) > 0$
(strict convexity), $D'$ is strictly increasing, so
\begin{equation}
K_s > K_t
\;\Rightarrow\;
\frac{-\lambda}{K_s} > \frac{-\lambda}{K_t}
\;\Rightarrow\;
D'(r_s^{\star}) > D'(r_t^{\star})
\;\Rightarrow\;
r_s^{\star} > r_t^{\star}.
\end{equation}
Uniqueness of $r_t^{\star}$ for each $t$ follows from strict convexity of $D$.
\end{proof}

\subsection{Local existence of an improving aligned allocation}
\label{app:proof-local}

\begin{proposition}[Local existence of an improving aligned allocation]
\label{prop:local}
Assume $D'(\Ebar)<0$, $\Var(K)>0$, and $\Ebar \in \mathrm{int}(I)$
(the uniform mean bitrate lies in the interior of the admissible
interval). For sufficiently small $\varepsilon>0$,
set $r_{t}^{(\varepsilon)} := \Ebar + \varepsilon(K_{t}-\bar{K})$.
Then $\mathbb{E}[r^{(\varepsilon)}]=\Ebar$ and $\Jdyn(r^{(\varepsilon)}) < \Juni$.
\end{proposition}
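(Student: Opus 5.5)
The mean identity is immediate: $\mathbb{E}[r^{(\varepsilon)}]=\Ebar+\varepsilon(\mathbb{E}[K]-\bar K)=\Ebar$. Since $T$ is finite and $K_t$ bounded, $\max_t|K_t-\bar K|<\infty$. Because $\Ebar\in\mathrm{int}(I)$, there is $\eta>0$ with $[\Ebar-\eta,\Ebar+\eta]\subset I$. Hence for $\varepsilon<\eta/\max_t|K_t-\bar K|$ the allocation is feasible, i.e.\ $r^{(\varepsilon)}\in\mathcal{F}(\Ebar,I)$.

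For the strict improvement, I view $\phi(\varepsilon):=\Jdyn(r^{(\varepsilon)})=\frac1T\sum_t K_t D(\Ebar+\varepsilon(K_t-\bar K))$ as a function of $\varepsilon$ on a neighbourhood of $0$, with $\phi(0)=\Juni$. The strategy is to show $\phi'(0)<0$ and conclude $\phi(\varepsilon)<\phi(0)$ for all sufficiently small $\varepsilon>0$. To avoid assuming $D\in C^2$, I only use differentiability of $D$ at $\Ebar$. This is implicit in the hypothesis $D'(\Ebar)<0$; if needed, read $D'(\Ebar)$ as the right derivative, which exists for convex $D$ at interior points. Then
\[
\phi'(0)=\frac1T\sum_t K_t(K_t-\bar K)D'(\Ebar)=D'(\Ebar)\,\Cov(K,K)=D'(\Ebar)\Var(K)<0,
\]
using $\mathbb{E}[K(K-\bar K)]=\Var(K)>0$.

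The only delicate point is turning $\phi'(0)<0$ into strict inequality without a second-order bound. Since $\phi$ is a finite sum of one-dimensional functions each differentiable at $0$, $\phi(\varepsilon)=\phi(0)+\varepsilon\phi'(0)+o(\varepsilon)$. Therefore there is $\varepsilon_0>0$ with $\phi(\varepsilon)-\phi(0)\le \tfrac12\varepsilon\,\phi'(0)<0$ for all $0<\varepsilon<\varepsilon_0$.

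If one prefers the paper's quadratic language, the same conclusion follows from \Cref{prop:taylor} (or the first display of \Cref{sec:upper}) when $D\in C^2$. Substituting $\Cov(K,r)=\varepsilon\Var(K)$ and $\mathbb{E}[K(r-\Ebar)^2]=\varepsilon^2\mathbb{E}[K(K-\bar K)^2]$ gives a gain of $-D'(\Ebar)\varepsilon\Var(K)-O(\varepsilon^2)$, which is positive for small $\varepsilon$. Taking the minimum of $\varepsilon_0$ and the feasibility threshold finishes the proof.
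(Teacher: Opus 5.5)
Your argument is correct and is essentially the paper's proof: both expand $\Jdyn(r^{(\varepsilon)})$ to first order in $\varepsilon$ and read off the linear coefficient $D'(\Ebar)\Var(K)<0$. Your version is slightly tighter in two ways. You check $r^{(\varepsilon)}\in I$ explicitly; the paper states $\Ebar\in\mathrm{int}(I)$ but never uses it. You also need only differentiability at $\Ebar$ for an $o(\varepsilon)$ remainder, whereas the paper's $O(\varepsilon^2)$ tacitly assumes more smoothness.

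Drop the aside about reading $D'(\Ebar)$ as a right derivative, because it is false. Steps with $K_t<\bar K$ move left and so pick up the left derivative. At a kink the perturbation can then hurt: take $D'_-(\Ebar)=-10$, $D'_+(\Ebar)=-1$, and $K_t\in\{1,3\}$ equally often. The right slope of $\varepsilon\mapsto\Jdyn(r^{(\varepsilon)})$ at $0$ is $-1.5+5>0$, so the loss increases.
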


\begin{proof}

Define the perturbed allocation
$r_t^{(\varepsilon)} := \bar r + \varepsilon(K_t - \bar K)$ for $\varepsilon > 0$.
The mean is
$\mathbb{E}[r^{(\varepsilon)}] = \bar r + \varepsilon(\bar K - \bar K) = \bar r$,
so the budget constraint is satisfied.

Expand $D(r_t^{(\varepsilon)})$ to first order in $\varepsilon$:
\begin{equation}
D\!\bigl(\bar r + \varepsilon(K_t-\bar K)\bigr)
= D(\bar r) + D'(\bar r)\,\varepsilon(K_t - \bar K) + O(\varepsilon^2).
\end{equation}
Multiplying by $K_t$, summing over $t$, and dividing by $T$:
\begin{equation}
\Jdyn\!\bigl(r^{(\varepsilon)}\bigr)
= D(\bar r)\,\mathbb{E}[K]
+ D'(\bar r)\,\varepsilon\,\mathbb{E}\!\left[K(K-\bar K)\right]
+ O(\varepsilon^2).
\end{equation}
Note that $\mathbb{E}[K(K-\bar K)] = \mathbb{E}[K^2] - \bar K^2 = \Var(K) > 0$ by assumption.
Therefore
\begin{equation}
\Jdyn\!\bigl(r^{(\varepsilon)}\bigr) - \Juni
= D'(\bar r)\,\varepsilon\,\Var(K) + O(\varepsilon^2).
\end{equation}
Since $D'(\bar r) < 0$ and $\Var(K) > 0$, the linear coefficient is strictly negative.
For all sufficiently small $\varepsilon > 0$ the linear term dominates and
$\Jdyn(r^{(\varepsilon)}) < \Juni$.
\end{proof}

\subsection[Exact second-order expansion]{Exact second-order expansion (\Cref{prop:taylor})}
\label{app:proof-taylor}

\begin{proposition}[Exact second-order expansion with remainder]
\label{prop:taylor}
Let $\delta_t=r_t-\Ebar$ and assume $D\in C^3$ on the interval containing the
allocation. With
$R_3(\delta):=D(\Ebar+\delta)-D(\Ebar)-D'(\Ebar)\delta-\tfrac12D''(\Ebar)\delta^2$,
\begin{equation}
\label{eq:taylor-exact-main}
\Delta_D(r)
=-D'(\Ebar)\Cov(K,r)-\tfrac12D''(\Ebar)\mathbb{E}[K\delta^2]
-\mathbb{E}[K R_3(\delta)],
\end{equation}
and $\mathbb{E}[K\delta^2]=\bar K\Var(r)+\Cov(K,\delta^2)$. If
$|D'''|\le M_3$, then
$|\mathbb{E}[K R_3(\delta)]|\le \tfrac{M_3}{6}\mathbb{E}[K|\delta|^3]$.
\end{proposition}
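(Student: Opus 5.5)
The identity in \eqref{eq:taylor-exact-main} follows from a pointwise Taylor identity, averaged against $K$, plus the budget constraint. By the definition of $R_3$, for every $t$ we can write
\[
D(r_t)=D(\Ebar)+D'(\Ebar)\delta_t+\tfrac12D''(\Ebar)\delta_t^2+R_3(\delta_t).
\]
This holds exactly, with no approximation. Multiplying by $K_t$ and averaging over the uniform index $t$ gives
\[
\Jdyn(r)=\bar K D(\Ebar)+D'(\Ebar)\mathbb{E}[K\delta]+\tfrac12D''(\Ebar)\mathbb{E}[K\delta^2]+\mathbb{E}[KR_3(\delta)].
\]
Subtracting this from $\Juni=\bar K D(\Ebar)$ then produces $\Delta_D(r)$. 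The one substantive point is the first-order term. The budget gives $\mathbb{E}[\delta]=\mathbb{E}[r]-\Ebar=0$, so $\mathbb{E}[K\delta]=\Cov(K,\delta)+\bar K\,\mathbb{E}[\delta]=\Cov(K,\delta)=\Cov(K,r)$, since shifting by a constant does not change a covariance. Substituting this yields exactly \eqref{eq:taylor-exact-main}.

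For the decomposition of $\mathbb{E}[K\delta^2]$, I would apply the covariance identity again: $\mathbb{E}[K\delta^2]=\Cov(K,\delta^2)+\bar K\,\mathbb{E}[\delta^2]$. Because $\mathbb{E}[\delta]=0$, we have $\mathbb{E}[\delta^2]=\Var(\delta)=\Var(r)$, which gives the stated form.

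For the remainder bound, I would use the Lagrange form of Taylor's theorem, valid since $D\in C^3$ on an interval containing both $\Ebar$ and $r_t$. It gives $R_3(\delta_t)=\tfrac16D'''(\xi_t)\delta_t^3$ for some $\xi_t$ between $\Ebar$ and $r_t$. Hence $|R_3(\delta_t)|\le \tfrac{M_3}{6}|\delta_t|^3$. Since $K_t\ge0$, the triangle inequality gives $|\mathbb{E}[KR_3(\delta)]|\le\mathbb{E}[K|R_3(\delta)|]\le\tfrac{M_3}{6}\mathbb{E}[K|\delta|^3]$.

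I do not expect a real obstacle. The only points needing care are using the budget constraint $\mathbb{E}[r]=\Ebar$ to turn $\mathbb{E}[K\delta]$ into a covariance, and using $K_t\ge 0$ so the absolute value passes inside the weighted average. One should also check that the convex hull of $\{\Ebar\}\cup\{r_t\}$ lies in the interval where $D\in C^3$ and $|D'''|\le M_3$. This holds automatically if that interval is $I$, since $I$ is an interval and $\Ebar$ is an average of points of $I$.
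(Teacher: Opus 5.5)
Your proposal is correct and follows essentially the same route as the paper: the exact pointwise Taylor identity weighted by $K_t$ and averaged, the budget constraint $\mathbb{E}[\delta]=0$ to turn $\mathbb{E}[K\delta]$ into $\Cov(K,r)$ and to split $\mathbb{E}[K\delta^2]$, and the Lagrange remainder combined with $K_t\ge 0$ for the bound. Your extra check that the convex hull of $\{\Ebar\}\cup\{r_t\}$ lies in the interval where $D\in C^3$ and $|D'''|\le M_3$ is a point the paper leaves implicit.
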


\begin{proof}
By Taylor's theorem applied to $D \in C^{3}$:
\begin{equation}
D(r_t) = D(\bar r) + D'(\bar r)\,\delta_t
+ \tfrac{1}{2}D''(\bar r)\,\delta_t^2 + R_3(\delta_t),
\end{equation}
where $\delta_t = r_t - \bar r$ and $R_3(\delta_t) = D(r_t) - D(\bar r)
- D'(\bar r)\delta_t - \tfrac{1}{2}D''(\bar r)\delta_t^2$.
Multiply by $K_t$, sum over $t$, and divide by $T$:
\begin{equation}
\frac{1}{T}\sum_t K_t D(r_t)
= D(\bar r)\,\mathbb{E}[K]
+ D'(\bar r)\,\mathbb{E}[K\delta]
+ \tfrac{1}{2}D''(\bar r)\,\mathbb{E}[K\delta^2]
+ \mathbb{E}[K R_3(\delta)].
\end{equation}
Since $\mathbb{E}[\delta] = \mathbb{E}[r - \bar r] = 0$ and
$\mathbb{E}[K\delta] = \mathbb{E}[K(r-\bar r)]
= \mathbb{E}[Kr] - \bar r\,\mathbb{E}[K] = \Cov(K, r)$,
we obtain
\begin{equation}
\Delta_D(r)
= \Juni - \Jdyn(r)
= -D'(\bar r)\,\Cov(K, r)
- \tfrac{1}{2}D''(\bar r)\,\mathbb{E}[K\delta^2]
- \mathbb{E}[K R_3(\delta)],
\end{equation}
which is \eqref{eq:taylor-exact-main}.

For the expansion of $\mathbb{E}[K\delta^2]$, write $K = \bar K + (K - \bar K)$:
\begin{equation}
\mathbb{E}[K\delta^2]
= \bar K\,\mathbb{E}[\delta^2] + \mathbb{E}[(K-\bar K)\delta^2]
= \bar K\,\Var(r) + \Cov(K, \delta^2).
\end{equation}

For the remainder bound: by Taylor's theorem with the Lagrange remainder,
$|R_3(\delta)| \leq \frac{M_3}{6}|\delta|^3$.
Multiplying by $K \geq 0$ and taking expectations gives
$|\mathbb{E}[K R_3(\delta)]| \leq \frac{M_3}{6}\,\mathbb{E}[K|\delta|^3]$.
\end{proof}

\subsection[Proof of exact global upper bound]{Proof of \Cref{thm:bound} (exact global upper bound under strong convexity)}
\label{app:proof-bound}

\begin{theorem}[Exact global upper bound under strong convexity]
\label{thm:bound}
Set $m_D:=\inf_{u\in I}D''(u)>0$, $L_D:=\sup_{u\in I}|D'(u)|<\infty$, and
$\alpha_+(r):=\max\{\Corr(K,-D(r)),0\}$. For any feasible allocation,
\begin{equation}
\label{eq:bound-universal}
\Delta_D(r)\le
\alpha_+(r)L_D\sigma_K\sigma_r-\tfrac12\bar K m_D\sigma_r^2
\le \frac{\alpha_+(r)^2L_D^2\sigma_K^2}{2\bar K m_D}
\le \frac{L_D^2\sigma_K^2}{2\bar K m_D}.
\end{equation}
\end{theorem}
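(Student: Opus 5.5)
The plan is to start from the exact decomposition already established in the proof of \Cref{thm:threshold}. There we showed that
\[
\Delta_D(r)=-\Cov\bigl(K,D(r)\bigr)-\bar K\bigl(\mathbb{E}[D(r)]-D(\Ebar)\bigr).
\]
I will bound the two terms separately: the alignment term from above using the Lipschitz constant $L_D$, and the Jensen penalty from below using the strong-convexity modulus $m_D$. Adding the two bounds gives the first inequality in \eqref{eq:bound-universal}. The remaining two inequalities then follow by the same concave-quadratic maximisation used in \Cref{thm:delta-max}.

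\emph{Jensen penalty.} Since $I$ is an interval and $\mathbb{E}[r]=\Ebar$, the point $\Ebar$ lies in $I$. Because $D''\ge m_D$ on $I$, Taylor's theorem with the Lagrange remainder gives, for every $t$,
\[
D(r_t)\ge D(\Ebar)+D'(\Ebar)(r_t-\Ebar)+\tfrac12 m_D(r_t-\Ebar)^2 .
\]
Averaging over $t$ kills the linear term, so $\mathbb{E}[D(r)]-D(\Ebar)\ge\tfrac12 m_D\sigma_r^2$. Multiplying by $\bar K>0$ bounds $-\bar K(\mathbb{E}[D(r)]-D(\Ebar))$ above by $-\tfrac12\bar K m_D\sigma_r^2$.

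\emph{Alignment term.} Write $-\Cov(K,D(r))=\Corr(K,-D(r))\,\sigma_K\,\sigma_{D(r)}\le\alpha_+(r)\,\sigma_K\,\sigma_{D(r)}$. The inequality holds because the factors $\sigma_K$ and $\sigma_{D(r)}$ are nonnegative. Next I control $\sigma_{D(r)}$ by $\sigma_r$. The variance of a random variable is at most its mean squared deviation from any constant, so
\[
\Var(D(r))\le\mathbb{E}\bigl[(D(r)-D(\Ebar))^2\bigr]\le L_D^2\,\mathbb{E}\bigl[(r-\Ebar)^2\bigr]=L_D^2\sigma_r^2 .
\]
The middle step uses the mean value theorem with $|D'|\le L_D$ on $I$. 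Hence $-\Cov(K,D(r))\le\alpha_+(r)L_D\sigma_K\sigma_r$. The degenerate cases where $\sigma_K=0$ or $\sigma_{D(r)}=0$ need a convention for the undefined correlation. In those cases the covariance itself is zero, so the bound holds with any value $\alpha_+\ge0$; I will adopt $\alpha_+=0$ there.

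\emph{Final inequalities.} Summing the two bounds yields the first inequality, whose right-hand side is a concave quadratic in $\sigma_r\ge0$ with nonnegative linear coefficient $\alpha_+L_D\sigma_K$. Its supremum over $\sigma_r$ is attained at $\sigma_r=\alpha_+L_D\sigma_K/(\bar K m_D)$, exactly as in \Cref{app:proof-delta-max}. This gives the value $\alpha_+^2L_D^2\sigma_K^2/(2\bar K m_D)$. The last inequality uses $\alpha_+\le1$.

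I expect no deep obstacle here. The only delicate points are bookkeeping: confirming that $\Ebar\in I$ so the Taylor and mean-value steps are legitimate, and handling the undefined-correlation edge cases. The key idea that makes the bound global is bounding $\sigma_{D(r)}$ by $L_D\sigma_r$ instead of linearising $D$.
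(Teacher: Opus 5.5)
Your proposal is correct and follows the paper's proof essentially step for step: the same covariance/Jensen decomposition, the Lipschitz bound $\sigma_{D(r)}\le L_D\sigma_r$ on the alignment term, the strong-convexity bound $\mathbb{E}[D(r)]-D(\Ebar)\ge\tfrac12 m_D\sigma_r^2$ on the Jensen penalty, and maximisation of the resulting concave quadratic in $\sigma_r$ followed by $\alpha_+\le 1$. Your extra bookkeeping fills in details the paper leaves implicit: deriving $\Var(D(r))\le L_D^2\sigma_r^2$ from the mean squared deviation about $D(\Ebar)$, checking $\Ebar\in I$, and setting $\alpha_+=0$ when the correlation is undefined.
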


\begin{proof}
Starting from the exact decomposition implicit in \Cref{thm:threshold},
\begin{equation}
\Delta_D(r)
= -\Cov(K, D(r)) - \mathbb{E}[K]\bigl(\mathbb{E}[D(r)] - D(\bar r)\bigr).
\end{equation}

\textbf{Step 1: upper-bounding the covariance term.}
Writing $-\Cov(K, D(r)) = \Cov(K, -D(r))$, we use
$\Cov(K, -D(r)) \leq \alpha_+(r)\,\sigma_K\,\sigma_{D(r)}$,
where $\alpha_+(r) = \max\{\Corr(K, -D(r)), 0\}$ (the non-negative part of
the correlation is the only part that can benefit).
Since $|D'| \leq L_D$ on $I$, $D$ is $L_D$-Lipschitz, so
$\sigma_{D(r)} \leq L_D\,\sigma_r$.
Therefore $-\Cov(K,D(r)) \leq \alpha_+(r)\,L_D\,\sigma_K\,\sigma_r$.

\textbf{Step 2: lower-bounding the Jensen penalty.}
Under $m_D$-strong convexity, for any $r$ with $\mathbb{E}[r]=\bar r$,
\begin{equation}
\mathbb{E}[D(r)] \;\geq\; D(\bar r) + \tfrac{m_D}{2}\,\mathbb{E}[(r-\bar r)^2]
= D(\bar r) + \tfrac{m_D}{2}\,\sigma_r^2,
\end{equation}
so $\mathbb{E}[D(r)] - D(\bar r) \geq \tfrac{m_D}{2}\sigma_r^2$ and
$-\mathbb{E}[K](\mathbb{E}[D(r)]-D(\bar r)) \leq -\tfrac{1}{2}\bar K\,m_D\,\sigma_r^2$.

\textbf{Step 3: combining.}
\begin{equation}
\label{eq:bound-corr}
\Delta_D(r) \;\leq\; \alpha_+(r)\,L_D\,\sigma_K\,\sigma_r
- \tfrac{1}{2}\,\bar K\,m_D\,\sigma_r^2.
\end{equation}

\textbf{Step 4: optimising over $\sigma_r$.}
The right-hand side is a concave quadratic in $\sigma_r \geq 0$ with
maximum at $\sigma_r^{\star} = \alpha_+(r)\,L_D\,\sigma_K/(\bar K\,m_D)$.
Substituting,
\begin{equation}
\Delta_D(r) \;\leq\;
\frac{\alpha_+(r)^2\,L_D^2\,\sigma_K^2}{2\,\bar K\,m_D}
\;\leq\;
\frac{L_D^2\,\sigma_K^2}{2\,\bar K\,m_D},
\end{equation}
where the last step uses $\alpha_+(r) \leq 1$, giving \eqref{eq:bound-universal}.
\end{proof}

\subsection[Proof of uniform optimality]{Proof of \Cref{thm:trap-global} (uniform optimality at the representation-aware minimiser)}
\label{app:proof-trap-global}

Suppose $\bar r = r^{\star}$, where $r^{\star} = \argmin_r \LL(r)$.
For every time step $t$,
\begin{equation}
\LL(r_t) \;\geq\; \LL(r^{\star}) \quad \forall\, r_t,
\end{equation}
since $r^{\star}$ is the global minimiser.  Since $K_t \geq 0$,
\begin{equation}
\mathcal{J}_{\LL}(r) = \mathbb{E}[K\,\LL(r)]
\;\geq\; \mathbb{E}[K\,\LL(r^{\star})]
= \bar K\,\LL(r^{\star})
= \bar K\,\LL(\bar r)
= \mathcal{J}_{\LL}^{\mathrm{unif}}.
\end{equation}

For the equality condition: $\mathcal{J}_{\LL}(r) = \mathcal{J}_{\LL}^{\mathrm{unif}}$
requires $K_t(\LL(r_t) - \LL(r^{\star})) = 0$ for every $t$.  When $K_t > 0$
and $\LL(r_t) \geq \LL(r^{\star})$, this forces $\LL(r_t) = \LL(r^{\star})$.
If $r^{\star}$ is the unique minimiser of $\LL$ (ensured by \Cref{ass:unique}),
then $r_t = r^{\star}$ for all $t$.

\subsection[Proof of optimality trap]{Proof of \Cref{prop:trap} (optimality trap: local expansion)}
\label{app:proof-trap}

Apply \Cref{prop:taylor} with $D$ replaced by $\LL$. With
$R_{\LL,3}(\delta):=\LL(\bar r+\delta)-\LL(\bar r)-\LL'(\bar r)\delta
-\tfrac12\LL''(\bar r)\delta^2$, the exact expansion is
\[
\Delta_{\LL}(r)
=-\LL'(\bar r)\Cov(K,r)
-\tfrac12\LL''(\bar r)\mathbb{E}[K\delta^2]
-\mathbb{E}[K R_{\LL,3}(\delta)],
\]
which is \eqref{eq:trap}. When $\bar r\approx r^\star$, the first-order
condition $\LL'(r^\star)=0$ gives $\LL'(\bar r)\approx0$, so the leading
second-order term is non-positive because $\LL''>0$ and
$\mathbb{E}[K\delta^2]\ge0$.

\subsection{Representation cost amplifies the optimality trap}
\label{app:proof-locell}

Apply the exact Taylor expansion of \Cref{prop:taylor} with $\LL$ in place of $D$:
\begin{equation}
\Delta_{\LL}(r)
= -\LL'(\bar r)\Cov(K,r)
- \tfrac{1}{2}\,\LL''(\bar r)\,\mathbb{E}[K\delta^2]
- \mathbb{E}[K R_{\LL,3}(\delta)].
\end{equation}
This is valid for any $\LL \in C^{3}$.  When $\bar r \approx r^{\star}$,
$\LL'(\bar r) \approx 0$, so the first term vanishes and the leading non-zero
contribution is
\begin{equation}
\Delta_{\LL}(r) \approx -\tfrac{1}{2}\LL''(\bar r)\,\mathbb{E}[K\delta^2].
\end{equation}
Since $\LL''(\bar r) > 0$ (by \Cref{ass:unique}) and $\mathbb{E}[K\delta^2] \geq 0$,
this leading term is non-positive, confirming that the net dynamic gain is
at most zero near $r^{\star}$.

Finally, $\LL(r) = D(r) + \lambda\Dsem(r)$ implies
\begin{equation}
\LL''(\bar r) = D''(\bar r) + \lambda\,\Dsem''(\bar r).
\end{equation}
When $\lambda>0$ and the representation term is locally convex with
$\Dsem''(\bar r)>0$, the curvature controlling the Jensen penalty is strictly
larger than under pure rate--distortion. The trap is therefore more binding
whenever the semantic representation cost adds positive local curvature.

\section{Significance of the validation-selected comparison}
\label{app:stats}

For each (method, dataset, horizon) cell, we sweep the candidate uniform
patch sizes, select the one with the lowest validation MSE, and compare
its test MSE against the matched dynamic baseline. We quantify the
reliability of this protocol by collecting
the $32$ per-cell relative gains
$\Delta\text{MSE} = 100\cdot(\text{MSE}_{\text{dyn}}-\text{MSE}_{\text{uni}})/\text{MSE}_{\text{dyn}}$
(eight datasets $\times$ four horizons, each averaged over five training
seeds) and applying a two-sided exact Wilcoxon signed-rank test per
method, with Holm correction across the three methods. $95\%$ confidence
intervals for the median are obtained from $10{,}000$ bootstrap
resamples; effect size is the rank-biserial correlation~$r$
(\Cref{tab:stats-headline}).

\begin{table}[h]
\centering
\small
\setlength{\tabcolsep}{6pt}
\renewcommand{\arraystretch}{1.10}
\caption[Setting-level significance]{\textbf{Setting-level significance
of the validation-selected uniform variant relative to the matched dynamic
baseline.} Median is the median relative test-MSE gain over the $32$
(dataset, horizon) cells. CI is a $10{,}000$-resample bootstrap $95\%$
percentile interval for the median; raw $p$ is the two-sided exact Wilcoxon
signed-rank $p$-value; Holm $p$ is the Holm-Bonferroni adjusted $p$-value
across the three methods; $r$ is the rank-biserial correlation.}
\label{tab:stats-headline}
\begin{tabular}{lrrrrr}
\toprule
\textbf{Method} & \textbf{Median (\%)} & \textbf{$95\%$ CI (\%)} & \textbf{Raw $p$} & \textbf{Holm $p$} & \textbf{$r$} \\
\midrule
EntroPE     & $+0.20$ & $[-0.24,+0.80]$ & $0.347$ & $0.695$ & $0.19$ \\
TimeMosaic  & $+0.12$ & $[-0.83,+0.57]$ & $0.695$ & $0.695$ & $0.08$ \\
HDMixer     & $+1.56$ & $[+0.60,+2.44]$ & $0.013$ & $0.040$ & $0.50$ \\
\bottomrule
\end{tabular}
\end{table}

\paragraph{Dataset-clustered re-analysis.}
The $32$ entries per method are not independent: four horizons share the
same dataset. We therefore re-run the analysis at the dataset level by
averaging the four horizon-level $\Delta\text{MSE}$ values into a single
number per (method, dataset), yielding eight clusters per method. Two
complementary checks are performed.

\emph{Cluster bootstrap.} We resample the eight datasets with replacement
$10{,}000$ times, carry along all four horizon-level $\Delta\text{MSE}$
values for each sampled dataset, and recompute the setting-level median
on the resulting resampled collection. The resulting $95\%$ percentile
intervals for the median gain are $[-0.37, +0.94]$ for EntroPE,
$[-1.14, +0.54]$ for TimeMosaic, and
$[+0.32, +2.64]$ for HDMixer. All three intervals lie above a practical
non-inferiority margin of $-2\%$.

\emph{Dataset-level signed-rank.} A two-sided exact Wilcoxon signed-rank
test on the eight cluster means yields $p = 0.844$ for EntroPE,
$p = 0.641$ for TimeMosaic, and $p = 0.195$ for HDMixer. With only eight
clusters this test is deliberately under-powered and does not support
a directional claim by itself.

\paragraph{What the tests do and do not support.}
EntroPE and TimeMosaic show no reliable directional effect at either
resolution. HDMixer shows a small positive shift at the setting level
that survives Holm correction (Holm $p = 0.040$) and the clustered
bootstrap (positive median CI), but is not significant under the
conservative dataset-level rank test. We therefore treat the directional
claim for HDMixer as a small effect compatible with the second-order
ceiling of \Cref{thm:delta-max}, not as a strong statement of universal
    superiority of fixed patches. The overall distribution of $\Delta\text{MSE}$
lies in $[-8.1\%, +6.5\%]$, with most mass in $(-2\%, +2\%)$, which is
the regime our threshold analysis predicts when the uniform operating
point is already at low local curvature.

\section{Full per-patch-size ablation tables}
\label{app:full-tables}

Tables~\ref{tab:patch-sweep-entrope}, \ref{tab:patch-sweep-hdmixer}, and
\ref{tab:patch-sweep-timemosaic} report the complete uniform-patch sweep for
EntroPE, HDMixer, and TimeMosaic, respectively. Each table shows test MSE and
MAE for every (dataset, horizon, patch-size) combination, together with the
percentage MSE improvement over the matched dynamic baseline (\%Imp) and the
training-time speedup relative to the dynamic variant (Spd). Results are averaged
over 5 seeds. These sweep results provide the candidate fixed patch sizes
from which the validation-selected summary in \Cref{tab:main} is obtained via
the protocol described in \Cref{sec:setup-exp}.

\begin{table*}
\centering
\setlength{\tabcolsep}{0pt}
\renewcommand{\arraystretch}{0.94}
\caption[EntroPE uniform patch sweep]{\textbf{EntroPE uniform patch sweep.} Test MSE and MAE averaged over 5 seeds. \%Imp = MSE improvement over dynamic baseline. Speedup is relative to dynamic training time. Lowest MSE per row, including Dyn., is \bestmse{blue bold}; the second-lowest is \secondmse{orange underlined}. Bottom row: mean$\pm$std across all dataset/horizon groups.}
\label{tab:patch-sweep-entrope}
\resizebox{\linewidth}{!}{%
\begin{tabular}{@{}l@{\hspace{2.5pt}}r@{\hspace{3.5pt}}r@{\hspace{2.0pt}}r@{\hspace{3.5pt}}r@{\hspace{2.0pt}}r@{\hspace{2.0pt}}r@{\hspace{2.0pt}}r@{\hspace{3.5pt}}r@{\hspace{2.0pt}}r@{\hspace{2.0pt}}r@{\hspace{2.0pt}}r@{\hspace{3.5pt}}r@{\hspace{2.0pt}}r@{\hspace{2.0pt}}r@{\hspace{2.0pt}}r@{\hspace{3.5pt}}r@{\hspace{2.0pt}}r@{\hspace{2.0pt}}r@{\hspace{2.0pt}}r@{\hspace{3.5pt}}r@{\hspace{2.0pt}}r@{\hspace{2.0pt}}r@{\hspace{2.0pt}}r@{}}
\toprule
\multicolumn{2}{l}{\textbf{Dataset/H}} & \multicolumn{2}{c}{\textbf{Dyn.}} & \multicolumn{4}{c}{$p=4$} & \multicolumn{4}{c}{$p=8$} & \multicolumn{4}{c}{$p=16$} & \multicolumn{4}{c}{$p=24$} & \multicolumn{4}{c}{$p=32$} \\
\cmidrule(lr){3-4} \cmidrule(lr){5-8} \cmidrule(lr){9-12} \cmidrule(lr){13-16} \cmidrule(lr){17-20} \cmidrule(lr){21-24}
\textbf{Dataset} & \textbf{H} & \textbf{MSE} & \textbf{MAE} & \textbf{MSE} & \textbf{MAE} & \textbf{Imp} & \textbf{Spd} & \textbf{MSE} & \textbf{MAE} & \textbf{Imp} & \textbf{Spd} & \textbf{MSE} & \textbf{MAE} & \textbf{Imp} & \textbf{Spd} & \textbf{MSE} & \textbf{MAE} & \textbf{Imp} & \textbf{Spd} & \textbf{MSE} & \textbf{MAE} & \textbf{Imp} & \textbf{Spd} \\
\midrule
\multirow{4}{*}{\textbf{Electricity}} & 96 & 0.182 & 0.278 & 0.182 & 0.278 & +0.3 & 1.90x & \secondmse{0.180} & 0.276 & +1.0 & 2.03x & 0.181 & 0.277 & +0.9 & 2.09x & \bestmse{0.180} & 0.276 & +1.3 & 2.11x & 0.181 & 0.277 & +0.5 & 2.11x \\
 & 192 & 0.189 & 0.285 & \bestmse{0.187} & 0.283 & +1.1 & 1.71x & \secondmse{0.187} & 0.283 & +0.8 & 1.79x & 0.188 & 0.284 & +0.4 & 1.80x & 0.188 & 0.284 & +0.4 & 1.84x & 0.188 & 0.284 & +0.2 & 1.85x \\
 & 336 & 0.206 & 0.300 & \secondmse{0.204} & 0.299 & +0.9 & 1.78x & \bestmse{0.204} & 0.298 & +1.0 & 1.87x & 0.204 & 0.299 & +0.7 & 1.70x & 0.204 & 0.298 & +0.9 & 1.94x & 0.205 & 0.299 & +0.3 & 1.73x \\
 & 720 & 0.247 & 0.331 & \secondmse{0.245} & 0.330 & +0.7 & 1.59x & \bestmse{0.245} & 0.330 & +0.8 & 1.66x & 0.245 & 0.330 & +0.6 & 1.71x & 0.245 & 0.329 & +0.7 & 1.71x & 0.246 & 0.330 & +0.2 & 1.75x \\
\addlinespace[1.5pt]
\cmidrule(l){1-24}
\addlinespace[1pt]
\multirow{4}{*}{\textbf{ETTh1}} & 96 & \bestmse{0.379} & 0.399 & 0.384 & 0.400 & -1.3 & 1.01x & 0.385 & 0.400 & -1.6 & 1.01x & 0.385 & 0.400 & -1.4 & 1.86x & \secondmse{0.383} & 0.399 & -1.0 & 1.84x & 0.384 & 0.400 & -1.4 & 1.86x \\
 & 192 & \bestmse{0.428} & 0.425 & 0.432 & 0.426 & -0.8 & 1.01x & 0.430 & 0.427 & -0.5 & 1.02x & 0.429 & 0.426 & -0.2 & 0.98x & 0.431 & 0.426 & -0.6 & 0.99x & \secondmse{0.428} & 0.426 & -0.0 & 0.99x \\
 & 336 & 0.473 & 0.443 & 0.466 & 0.442 & +1.6 & 1.07x & 0.466 & 0.443 & +1.6 & 1.07x & 0.465 & 0.443 & +1.7 & 1.06x & \bestmse{0.463} & 0.442 & +2.2 & 1.08x & \secondmse{0.463} & 0.441 & +2.2 & 1.07x \\
 & 720 & \bestmse{0.462} & 0.460 & 0.476 & 0.466 & -3.0 & 1.06x & \secondmse{0.465} & 0.461 & -0.6 & 1.09x & 0.480 & 0.468 & -3.8 & 1.01x & 0.478 & 0.469 & -3.3 & 1.08x & 0.471 & 0.464 & -1.9 & 1.21x \\
\addlinespace[1.5pt]
\cmidrule(l){1-24}
\addlinespace[1pt]
\multirow{4}{*}{\textbf{ETTh2}} & 96 & \secondmse{0.321} & 0.365 & 0.328 & 0.371 & -2.1 & 1.05x & 0.324 & 0.369 & -0.8 & 1.09x & 0.325 & 0.370 & -1.1 & 1.07x & \bestmse{0.319} & 0.365 & +0.7 & 1.06x & 0.325 & 0.368 & -1.3 & 1.07x \\
 & 192 & 0.410 & 0.421 & 0.411 & 0.423 & -0.3 & 0.88x & 0.404 & 0.419 & +1.3 & 0.93x & \bestmse{0.403} & 0.416 & +1.7 & 0.92x & \secondmse{0.403} & 0.418 & +1.7 & 0.93x & 0.405 & 0.418 & +1.1 & 0.94x \\
 & 336 & 0.419 & 0.439 & 0.416 & 0.435 & +0.8 & 1.00x & \bestmse{0.415} & 0.435 & +1.0 & 1.01x & \secondmse{0.415} & 0.435 & +1.0 & 1.02x & 0.416 & 0.435 & +0.8 & 1.01x & 0.416 & 0.435 & +0.8 & 0.99x \\
 & 720 & 0.461 & 0.467 & \secondmse{0.456} & 0.464 & +1.2 & 0.91x & 0.456 & 0.464 & +1.2 & 0.91x & 0.458 & 0.466 & +0.7 & 0.97x & \bestmse{0.454} & 0.464 & +1.6 & 0.90x & 0.458 & 0.466 & +0.8 & 0.96x \\
\addlinespace[1.5pt]
\cmidrule(l){1-24}
\addlinespace[1pt]
\multirow{4}{*}{\textbf{ETTm1}} & 96 & 0.329 & 0.364 & \bestmse{0.328} & 0.363 & +0.2 & 1.11x & 0.333 & 0.364 & -1.1 & 1.10x & \secondmse{0.328} & 0.364 & +0.2 & 1.10x & 0.329 & 0.364 & +0.2 & 1.10x & 0.330 & 0.365 & -0.1 & 1.10x \\
 & 192 & 0.371 & 0.387 & \secondmse{0.369} & 0.385 & +0.7 & 1.08x & \bestmse{0.367} & 0.385 & +1.1 & 1.11x & 0.371 & 0.387 & +0.2 & 1.13x & 0.373 & 0.387 & -0.3 & 1.06x & 0.372 & 0.387 & -0.2 & 1.17x \\
 & 336 & \bestmse{0.395} & 0.404 & 0.399 & 0.406 & -1.2 & 1.07x & \secondmse{0.399} & 0.406 & -1.1 & 1.07x & 0.400 & 0.408 & -1.4 & 1.05x & 0.402 & 0.408 & -1.8 & 1.08x & 0.399 & 0.406 & -1.1 & 1.07x \\
 & 720 & \bestmse{0.452} & 0.439 & \secondmse{0.454} & 0.439 & -0.4 & 1.06x & 0.458 & 0.442 & -1.3 & 1.02x & 0.458 & 0.442 & -1.2 & 1.12x & 0.455 & 0.440 & -0.5 & 1.00x & 0.457 & 0.441 & -1.1 & 1.03x \\
\addlinespace[1.5pt]
\cmidrule(l){1-24}
\addlinespace[1pt]
\multirow{4}{*}{\textbf{ETTm2}} & 96 & \bestmse{0.183} & 0.270 & 0.185 & 0.270 & -1.2 & 1.08x & \secondmse{0.184} & 0.269 & -0.7 & 1.08x & 0.186 & 0.270 & -1.7 & 1.09x & 0.186 & 0.271 & -1.6 & 1.09x & 0.186 & 0.271 & -1.4 & 1.09x \\
 & 192 & \bestmse{0.253} & 0.315 & 0.258 & 0.317 & -1.7 & 1.03x & 0.256 & 0.316 & -1.1 & 1.04x & \secondmse{0.254} & 0.315 & -0.2 & 1.08x & 0.257 & 0.317 & -1.5 & 1.07x & 0.256 & 0.316 & -1.1 & 1.07x \\
 & 336 & 0.324 & 0.359 & 0.328 & 0.360 & -1.1 & 1.03x & \secondmse{0.324} & 0.359 & +0.1 & 2.33x & \bestmse{0.321} & 0.357 & +0.8 & 1.01x & 0.325 & 0.359 & -0.2 & 2.37x & 0.325 & 0.360 & -0.4 & 2.35x \\
 & 720 & 0.425 & 0.419 & 0.426 & 0.419 & -0.1 & 1.06x & 0.423 & 0.418 & +0.4 & 1.06x & \secondmse{0.422} & 0.417 & +0.8 & 1.10x & \bestmse{0.421} & 0.417 & +0.9 & 0.93x & 0.424 & 0.418 & +0.2 & 1.01x \\
\addlinespace[1.5pt]
\cmidrule(l){1-24}
\addlinespace[1pt]
\multirow{4}{*}{\textbf{Exchange}} & 96 & 0.089 & 0.210 & 0.089 & 0.210 & -0.5 & 1.04x & 0.090 & 0.211 & -0.8 & 1.03x & \bestmse{0.088} & 0.209 & +0.8 & 1.04x & \secondmse{0.088} & 0.209 & +0.4 & 1.03x & 0.089 & 0.210 & +0.1 & 1.05x \\
 & 192 & \bestmse{0.186} & 0.308 & 0.188 & 0.309 & -0.8 & 1.02x & \secondmse{0.187} & 0.309 & -0.4 & 1.02x & 0.187 & 0.308 & -0.5 & 1.01x & 0.188 & 0.309 & -0.9 & 1.03x & 0.187 & 0.309 & -0.6 & 1.04x \\
 & 336 & 0.357 & 0.432 & 0.362 & 0.435 & -1.3 & 1.03x & \bestmse{0.355} & 0.430 & +0.8 & 1.05x & \secondmse{0.356} & 0.430 & +0.5 & 1.26x & 0.361 & 0.434 & -1.1 & 1.26x & 0.358 & 0.432 & -0.1 & 1.05x \\
 & 720 & 0.953 & 0.735 & \secondmse{0.932} & 0.727 & +2.2 & 0.93x & 0.942 & 0.732 & +1.2 & 0.94x & \bestmse{0.928} & 0.726 & +2.7 & 0.95x & 0.940 & 0.730 & +1.5 & 1.00x & 0.943 & 0.733 & +1.1 & 0.86x \\
\addlinespace[1.5pt]
\cmidrule(l){1-24}
\addlinespace[1pt]
\multirow{4}{*}{\textbf{Traffic}} & 96 & 0.542 & 0.326 & \bestmse{0.523} & 0.318 & +3.6 & 1.68x & \secondmse{0.540} & 0.322 & +0.4 & 1.83x & 0.542 & 0.324 & +0.1 & 1.90x & 0.545 & 0.325 & -0.4 & 1.92x & 0.547 & 0.327 & -0.9 & 1.89x \\
 & 192 & 0.544 & 0.325 & \bestmse{0.528} & 0.319 & +2.9 & 1.67x & \secondmse{0.535} & 0.322 & +1.8 & 1.79x & 0.544 & 0.325 & +0.1 & 1.81x & 0.549 & 0.325 & -0.8 & 1.82x & 0.548 & 0.325 & -0.7 & 1.81x \\
 & 336 & 0.556 & 0.332 & \bestmse{0.548} & 0.325 & +1.4 & 1.56x & 0.553 & 0.327 & +0.5 & 1.69x & \secondmse{0.550} & 0.326 & +1.0 & 1.71x & 0.553 & 0.327 & +0.5 & 1.76x & 0.555 & 0.336 & +0.2 & 1.74x \\
 & 720 & 0.595 & 0.347 & \bestmse{0.582} & 0.338 & +2.2 & 1.51x & \secondmse{0.589} & 0.340 & +1.1 & 1.52x & 0.595 & 0.343 & +0.0 & 1.55x & 0.594 & 0.343 & +0.2 & 1.61x & 0.607 & 0.346 & -2.0 & 1.52x \\
\addlinespace[1.5pt]
\cmidrule(l){1-24}
\addlinespace[1pt]
\multirow{4}{*}{\textbf{Weather}} & 96 & 0.173 & 0.221 & \bestmse{0.172} & 0.219 & +0.5 & 1.24x & \secondmse{0.173} & 0.219 & +0.3 & 1.27x & 0.174 & 0.220 & -0.2 & 1.28x & 0.175 & 0.220 & -0.8 & 1.33x & 0.174 & 0.221 & -0.5 & 1.30x \\
 & 192 & \bestmse{0.219} & 0.260 & \secondmse{0.219} & 0.259 & -0.1 & 1.16x & 0.220 & 0.260 & -0.3 & 1.18x & 0.220 & 0.260 & -0.6 & 1.18x & 0.221 & 0.260 & -0.8 & 1.21x & 0.220 & 0.260 & -0.7 & 1.18x \\
 & 336 & \bestmse{0.273} & 0.298 & 0.274 & 0.298 & -0.4 & 2.08x & \secondmse{0.274} & 0.297 & -0.1 & 2.11x & 0.274 & 0.297 & -0.2 & 1.08x & 0.274 & 0.298 & -0.3 & 2.10x & 0.274 & 0.299 & -0.2 & 1.08x \\
 & 720 & \bestmse{0.345} & 0.345 & 0.346 & 0.346 & -0.2 & 1.17x & 0.345 & 0.346 & -0.2 & 1.19x & \secondmse{0.345} & 0.345 & -0.0 & 1.20x & 0.345 & 0.345 & -0.0 & 1.22x & 0.346 & 0.345 & -0.2 & 1.23x \\
\midrule
\multicolumn{2}{l}{\textbf{Avg $\pm$ Std}} & -- & -- & -- & -- & \shortstack{+0.1\\$\pm$1.5} & \shortstack{1.24\\$\pm$0.33} & -- & -- & \shortstack{+0.2\\$\pm$1.0} & \shortstack{1.31\\$\pm$0.40} & -- & -- & \shortstack{+0.1\\$\pm$1.2} & \shortstack{1.28\\$\pm$0.35} & -- & -- & \shortstack{-0.1\\$\pm$1.2} & \shortstack{1.36\\$\pm$0.43} & -- & -- & \shortstack{-0.3\\$\pm$0.9} & \shortstack{1.32\\$\pm$0.40} \\
\bottomrule
\end{tabular}
}
\end{table*}

\begin{table*}
\centering
\setlength{\tabcolsep}{0pt}
\renewcommand{\arraystretch}{0.94}
\caption[HDMixer uniform patch sweep]{\textbf{HDMixer uniform patch sweep.} Test MSE and MAE averaged over multiple seeds. \%Imp = MSE improvement over dynamic baseline. Speedup is relative to dynamic training time. Lowest MSE per row, including Dyn., is \bestmse{blue bold}; the second-lowest is \secondmse{orange underlined}. Bottom row: mean$\pm$std across all dataset/horizon groups.}
\label{tab:patch-sweep-hdmixer}
\resizebox{\linewidth}{!}{%
\begin{tabular}{@{}l@{\hspace{2.5pt}}r@{\hspace{3.5pt}}r@{\hspace{2.0pt}}r@{\hspace{3.5pt}}r@{\hspace{2.0pt}}r@{\hspace{2.0pt}}r@{\hspace{2.0pt}}r@{\hspace{3.5pt}}r@{\hspace{2.0pt}}r@{\hspace{2.0pt}}r@{\hspace{2.0pt}}r@{\hspace{3.5pt}}r@{\hspace{2.0pt}}r@{\hspace{2.0pt}}r@{\hspace{2.0pt}}r@{\hspace{3.5pt}}r@{\hspace{2.0pt}}r@{\hspace{2.0pt}}r@{\hspace{2.0pt}}r@{\hspace{3.5pt}}r@{\hspace{2.0pt}}r@{\hspace{2.0pt}}r@{\hspace{2.0pt}}r@{}}
\toprule
\multicolumn{2}{l}{\textbf{Dataset/H}} & \multicolumn{2}{c}{\textbf{Dyn.}} & \multicolumn{4}{c}{$p=4$} & \multicolumn{4}{c}{$p=8$} & \multicolumn{4}{c}{$p=16$} & \multicolumn{4}{c}{$p=24$} & \multicolumn{4}{c}{$p=32$} \\
\cmidrule(lr){3-4} \cmidrule(lr){5-8} \cmidrule(lr){9-12} \cmidrule(lr){13-16} \cmidrule(lr){17-20} \cmidrule(lr){21-24}
\textbf{Dataset} & \textbf{H} & \textbf{MSE} & \textbf{MAE} & \textbf{MSE} & \textbf{MAE} & \textbf{Imp} & \textbf{Spd} & \textbf{MSE} & \textbf{MAE} & \textbf{Imp} & \textbf{Spd} & \textbf{MSE} & \textbf{MAE} & \textbf{Imp} & \textbf{Spd} & \textbf{MSE} & \textbf{MAE} & \textbf{Imp} & \textbf{Spd} & \textbf{MSE} & \textbf{MAE} & \textbf{Imp} & \textbf{Spd} \\
\midrule
\multirow{4}{*}{\textbf{Electricity}} & 96 & 0.152 & 0.254 & 0.154 & 0.255 & -1.4 & 3.23x & \bestmse{0.148} & 0.249 & +2.4 & 4.46x & \secondmse{0.151} & 0.252 & +0.9 & 5.05x & 0.154 & 0.258 & -1.7 & 5.29x & 0.151 & 0.252 & +0.9 & 5.32x \\
 & 192 & \secondmse{0.166} & 0.266 & 0.186 & 0.278 & -12.0 & 2.93x & \bestmse{0.163} & 0.262 & +1.3 & 3.74x & 0.167 & 0.267 & -1.0 & 4.01x & 0.170 & 0.271 & -2.9 & 3.84x & 0.170 & 0.269 & -2.3 & 3.99x \\
 & 336 & 0.182 & 0.284 & \secondmse{0.181} & 0.280 & +0.5 & 1.98x & \bestmse{0.175} & 0.275 & +3.6 & 2.27x & 0.187 & 0.290 & -3.0 & 2.36x & 0.182 & 0.284 & -0.0 & 2.40x & 0.182 & 0.284 & -0.1 & 2.42x \\
 & 720 & 0.218 & 0.315 & 0.250 & 0.331 & -14.4 & 2.64x & 0.221 & 0.312 & -1.2 & 3.01x & \bestmse{0.215} & 0.310 & +1.5 & 3.22x & 0.222 & 0.319 & -1.7 & 3.40x & \secondmse{0.215} & 0.310 & +1.5 & 3.45x \\
\addlinespace[1.5pt]
\cmidrule(l){1-24}
\addlinespace[1pt]
\multirow{4}{*}{\textbf{ETTh1}} & 96 & 0.374 & 0.400 & \bestmse{0.365} & 0.392 & +2.5 & 1.12x & \secondmse{0.367} & 0.394 & +1.9 & 1.13x & 0.369 & 0.397 & +1.3 & 1.14x & 0.376 & 0.403 & -0.4 & 1.14x & 0.374 & 0.401 & -0.0 & 1.14x \\
 & 192 & 0.413 & 0.420 & \bestmse{0.402} & 0.412 & +2.6 & 1.06x & \secondmse{0.405} & 0.414 & +2.0 & 1.07x & 0.407 & 0.416 & +1.4 & 1.03x & 0.412 & 0.421 & +0.2 & 1.01x & 0.412 & 0.421 & +0.3 & 1.04x \\
 & 336 & 0.413 & 0.429 & \bestmse{0.400} & 0.420 & +3.2 & 1.16x & \secondmse{0.402} & 0.422 & +2.7 & 1.15x & 0.405 & 0.425 & +2.0 & 1.24x & 0.408 & 0.429 & +1.2 & 1.23x & 0.406 & 0.427 & +1.7 & 1.25x \\
 & 720 & 0.463 & 0.468 & 0.445 & 0.459 & +3.8 & 1.06x & \bestmse{0.440} & 0.455 & +5.1 & 1.05x & \secondmse{0.441} & 0.457 & +4.8 & 1.04x & 0.448 & 0.463 & +3.3 & 1.05x & 0.445 & 0.462 & +4.0 & 1.33x \\
\addlinespace[1.5pt]
\cmidrule(l){1-24}
\addlinespace[1pt]
\multirow{4}{*}{\textbf{ETTh2}} & 96 & 0.267 & 0.333 & \bestmse{0.262} & 0.329 & +1.8 & 1.11x & \secondmse{0.263} & 0.329 & +1.6 & 1.13x & 0.265 & 0.331 & +0.7 & 1.08x & 0.265 & 0.331 & +0.5 & 1.09x & 0.266 & 0.331 & +0.5 & 1.16x \\
 & 192 & 0.322 & 0.373 & \secondmse{0.319} & 0.371 & +0.9 & 1.06x & \bestmse{0.318} & 0.369 & +1.1 & 1.06x & 0.321 & 0.372 & +0.1 & 1.07x & 0.322 & 0.372 & +0.0 & 1.06x & 0.320 & 0.371 & +0.5 & 1.25x \\
 & 336 & 0.324 & 0.384 & 0.321 & 0.382 & +1.0 & 1.07x & \bestmse{0.319} & 0.380 & +1.4 & 1.11x & 0.323 & 0.384 & +0.2 & 1.12x & 0.324 & 0.383 & +0.2 & 1.06x & \secondmse{0.320} & 0.381 & +1.3 & 1.08x \\
 & 720 & 0.402 & 0.434 & \secondmse{0.395} & 0.430 & +1.7 & 1.06x & 0.401 & 0.435 & +0.1 & 1.05x & 0.400 & 0.435 & +0.5 & 1.07x & 0.398 & 0.432 & +1.1 & 1.06x & \bestmse{0.393} & 0.429 & +2.3 & 1.25x \\
\addlinespace[1.5pt]
\cmidrule(l){1-24}
\addlinespace[1pt]
\multirow{4}{*}{\textbf{ETTm1}} & 96 & \bestmse{0.296} & 0.348 & 0.303 & 0.347 & -2.4 & 1.41x & 0.302 & 0.346 & -2.1 & 1.44x & \secondmse{0.302} & 0.345 & -2.0 & 1.43x & 0.303 & 0.345 & -2.3 & 1.44x & 0.304 & 0.346 & -2.7 & 1.47x \\
 & 192 & \bestmse{0.334} & 0.368 & \secondmse{0.337} & 0.366 & -1.0 & 1.20x & 0.338 & 0.366 & -1.2 & 0.98x & 0.337 & 0.365 & -1.2 & 1.08x & 0.338 & 0.366 & -1.2 & 1.09x & 0.338 & 0.365 & -1.2 & 1.08x \\
 & 336 & \bestmse{0.367} & 0.388 & 0.371 & 0.386 & -1.1 & 1.23x & \secondmse{0.370} & 0.385 & -0.8 & 1.25x & 0.371 & 0.386 & -1.1 & 1.22x & 0.371 & 0.385 & -1.0 & 1.23x & 0.372 & 0.386 & -1.5 & 1.25x \\
 & 720 & 0.428 & 0.420 & 0.427 & 0.416 & +0.3 & 1.11x & 0.429 & 0.417 & -0.1 & 1.26x & \bestmse{0.427} & 0.416 & +0.4 & 1.29x & \secondmse{0.427} & 0.416 & +0.4 & 1.22x & 0.428 & 0.417 & +0.0 & 1.21x \\
\addlinespace[1.5pt]
\cmidrule(l){1-24}
\addlinespace[1pt]
\multirow{4}{*}{\textbf{ETTm2}} & 96 & 0.167 & 0.256 & 0.165 & 0.254 & +1.1 & 1.36x & 0.165 & 0.254 & +1.1 & 1.40x & 0.165 & 0.254 & +1.1 & 1.41x & \bestmse{0.165} & 0.254 & +1.2 & 1.40x & \secondmse{0.165} & 0.254 & +1.2 & 1.46x \\
 & 192 & 0.221 & 0.293 & 0.220 & 0.292 & +0.5 & 1.07x & \secondmse{0.220} & 0.292 & +0.7 & 1.12x & 0.220 & 0.292 & +0.5 & 1.13x & 0.220 & 0.292 & +0.5 & 1.07x & \bestmse{0.220} & 0.291 & +0.8 & 1.16x \\
 & 336 & 0.278 & 0.330 & 0.277 & 0.328 & +0.3 & 2.02x & 0.277 & 0.329 & +0.3 & 1.20x & 0.278 & 0.329 & -0.0 & 2.07x & \secondmse{0.276} & 0.329 & +0.4 & 2.11x & \bestmse{0.276} & 0.328 & +0.5 & 1.21x \\
 & 720 & \bestmse{0.366} & 0.385 & 0.381 & 0.391 & -4.0 & 1.22x & 0.385 & 0.392 & -5.3 & 1.18x & 0.383 & 0.391 & -4.6 & 1.18x & \secondmse{0.380} & 0.390 & -3.8 & 1.17x & 0.380 & 0.390 & -3.9 & 1.18x \\
\addlinespace[1.5pt]
\cmidrule(l){1-24}
\addlinespace[1pt]
\multirow{4}{*}{\textbf{Exchange}} & 96 & 0.088 & 0.208 & 0.091 & 0.210 & -2.5 & 1.11x & \secondmse{0.086} & 0.204 & +2.9 & 1.08x & 0.086 & 0.203 & +2.9 & 1.10x & 0.087 & 0.204 & +1.9 & 1.09x & \bestmse{0.084} & 0.200 & +5.1 & 1.08x \\
 & 192 & 0.190 & 0.310 & 0.199 & 0.318 & -5.0 & 0.99x & 0.193 & 0.312 & -1.4 & 0.98x & 0.188 & 0.306 & +1.2 & 1.06x & \secondmse{0.183} & 0.302 & +3.7 & 0.97x & \bestmse{0.183} & 0.301 & +3.8 & 1.02x \\
 & 336 & 0.357 & 0.433 & 0.386 & 0.453 & -8.1 & 1.04x & 0.358 & 0.435 & -0.4 & 0.65x & 0.351 & 0.428 & +1.5 & 1.05x & \bestmse{0.333} & 0.416 & +6.6 & 1.05x & \secondmse{0.340} & 0.419 & +4.8 & 0.65x \\
 & 720 & \secondmse{1.054} & 0.778 & \bestmse{1.042} & 0.772 & +1.2 & 1.02x & 1.069 & 0.783 & -1.4 & 1.01x & 1.097 & 0.792 & -4.1 & 1.19x & 1.105 & 0.795 & -4.8 & 1.10x & 1.119 & 0.800 & -6.2 & 1.10x \\
\addlinespace[1.5pt]
\cmidrule(l){1-24}
\addlinespace[1pt]
\multirow{4}{*}{\textbf{Traffic}} & 96 & 0.419 & 0.298 & 0.434 & 0.324 & -3.7 & 2.89x & 0.421 & 0.320 & -0.6 & 4.03x & \secondmse{0.411} & 0.295 & +1.7 & 4.38x & \bestmse{0.409} & 0.289 & +2.2 & 4.51x & 0.418 & 0.304 & +0.0 & 4.52x \\
 & 192 & 0.446 & 0.332 & 0.443 & 0.324 & +0.5 & 2.66x & 0.436 & 0.322 & +2.2 & 3.63x & \secondmse{0.433} & 0.314 & +2.7 & 3.77x & \bestmse{0.428} & 0.305 & +4.0 & 3.79x & 0.434 & 0.312 & +2.7 & 3.91x \\
 & 336 & 0.468 & 0.346 & 0.451 & 0.325 & +3.6 & 1.77x & 0.456 & 0.333 & +2.5 & 2.12x & \bestmse{0.439} & 0.309 & +6.2 & 2.23x & \secondmse{0.444} & 0.317 & +5.1 & 3.51x & 0.444 & 0.317 & +5.1 & 2.23x \\
 & 720 & 0.482 & 0.350 & 0.475 & 0.343 & +1.6 & 2.42x & 0.475 & 0.342 & +1.5 & 2.95x & \bestmse{0.473} & 0.338 & +1.9 & 3.09x & \secondmse{0.474} & 0.338 & +1.8 & 3.17x & 0.473 & 0.338 & +1.8 & 3.20x \\
\addlinespace[1.5pt]
\cmidrule(l){1-24}
\addlinespace[1pt]
\multirow{4}{*}{\textbf{Weather}} & 96 & 0.166 & 0.217 & 0.165 & 0.217 & +0.5 & 1.64x & 0.164 & 0.216 & +1.3 & 1.73x & 0.160 & 0.214 & +3.9 & 1.77x & \secondmse{0.157} & 0.212 & +5.2 & 1.88x & \bestmse{0.157} & 0.210 & +5.4 & 1.84x \\
 & 192 & 0.208 & 0.253 & 0.209 & 0.255 & -0.6 & 1.44x & 0.211 & 0.257 & -1.7 & 1.55x & \secondmse{0.202} & 0.251 & +2.7 & 1.53x & \bestmse{0.202} & 0.251 & +2.8 & 1.60x & 0.202 & 0.251 & +2.6 & 1.53x \\
 & 336 & 0.257 & 0.289 & 0.259 & 0.292 & -1.1 & 1.41x & \bestmse{0.256} & 0.289 & +0.3 & 0.34x & 0.258 & 0.293 & -0.5 & 1.56x & \secondmse{0.256} & 0.292 & +0.2 & 1.54x & 0.257 & 0.292 & -0.1 & 0.34x \\
 & 720 & 0.339 & 0.348 & 0.328 & 0.338 & +3.4 & 1.38x & 0.336 & 0.348 & +0.8 & 1.43x & 0.326 & 0.340 & +3.8 & 1.26x & \secondmse{0.324} & 0.337 & +4.5 & 1.55x & \bestmse{0.323} & 0.337 & +4.8 & 1.55x \\
\midrule
\multicolumn{2}{l}{\textbf{Avg $\pm$ Std}} & -- & -- & -- & -- & \shortstack{-0.8\\$\pm$4.2} & \shortstack{1.56\\$\pm$0.67} & -- & -- & \shortstack{+0.6\\$\pm$2.0} & \shortstack{1.67\\$\pm$1.05} & -- & -- & \shortstack{+0.8\\$\pm$2.3} & \shortstack{1.82\\$\pm$1.12} & -- & -- & \shortstack{+0.9\\$\pm$2.7} & \shortstack{1.88\\$\pm$1.19} & -- & -- & \shortstack{+1.0\\$\pm$2.7} & \shortstack{1.80\\$\pm$1.21} \\
\bottomrule
\end{tabular}
}
\end{table*}

\begin{table*}
\centering
\setlength{\tabcolsep}{0pt}
\renewcommand{\arraystretch}{0.94}
\caption[TimeMosaic uniform patch sweep]{\textbf{TimeMosaic uniform patch sweep.} Test MSE and MAE averaged over multiple seeds. \%Imp = MSE improvement over dynamic baseline. Speedup is relative to dynamic training time. Lowest MSE per row, including Dyn., is \bestmse{blue bold}; the second-lowest is \secondmse{orange underlined}. Bottom row: mean$\pm$std across all dataset/horizon groups.}
\label{tab:patch-sweep-timemosaic}
\resizebox{\linewidth}{!}{%
\begin{tabular}{@{}l@{\hspace{2.5pt}}r@{\hspace{3.5pt}}r@{\hspace{2.0pt}}r@{\hspace{3.5pt}}r@{\hspace{2.0pt}}r@{\hspace{2.0pt}}r@{\hspace{2.0pt}}r@{\hspace{3.5pt}}r@{\hspace{2.0pt}}r@{\hspace{2.0pt}}r@{\hspace{2.0pt}}r@{\hspace{3.5pt}}r@{\hspace{2.0pt}}r@{\hspace{2.0pt}}r@{\hspace{2.0pt}}r@{}}
\toprule
\multicolumn{2}{l}{\textbf{Dataset/H}} & \multicolumn{2}{c}{\textbf{Dyn.}} & \multicolumn{4}{c}{$p=8$} & \multicolumn{4}{c}{$p=16$} & \multicolumn{4}{c}{$p=32$} \\
\cmidrule(lr){3-4} \cmidrule(lr){5-8} \cmidrule(lr){9-12} \cmidrule(lr){13-16}
\textbf{Dataset} & \textbf{H} & \textbf{MSE} & \textbf{MAE} & \textbf{MSE} & \textbf{MAE} & \textbf{Imp} & \textbf{Spd} & \textbf{MSE} & \textbf{MAE} & \textbf{Imp} & \textbf{Spd} & \textbf{MSE} & \textbf{MAE} & \textbf{Imp} & \textbf{Spd} \\
\midrule
\multirow{4}{*}{\textbf{Electricity}} & 96 & \bestmse{0.165} & 0.251 & \secondmse{0.166} & 0.250 & -0.5 & 1.01x & 0.167 & 0.253 & -1.2 & 1.41x & 0.168 & 0.255 & -1.6 & 2.40x \\
 & 192 & \secondmse{0.178} & 0.263 & \bestmse{0.178} & 0.261 & +0.0 & 1.00x & 0.179 & 0.264 & -0.5 & 1.40x & 0.180 & 0.265 & -0.7 & 2.34x \\
 & 336 & \secondmse{0.194} & 0.279 & \bestmse{0.192} & 0.276 & +0.6 & 1.11x & 0.195 & 0.280 & -0.7 & 1.35x & 0.196 & 0.281 & -1.2 & 2.53x \\
 & 720 & \secondmse{0.224} & 0.303 & \bestmse{0.223} & 0.302 & +0.3 & 1.00x & 0.225 & 0.305 & -0.6 & 1.40x & 0.226 & 0.305 & -0.8 & 2.35x \\
\addlinespace[1.5pt]
\cmidrule(l){1-16}
\addlinespace[1pt]
\multirow{4}{*}{\textbf{ETTh1}} & 96 & \bestmse{0.360} & 0.385 & 0.368 & 0.391 & -2.2 & 1.07x & \secondmse{0.363} & 0.387 & -0.7 & 1.34x & 0.365 & 0.389 & -1.4 & 1.85x \\
 & 192 & \bestmse{0.405} & 0.413 & 0.413 & 0.419 & -2.0 & 1.06x & 0.414 & 0.417 & -2.2 & 1.25x & \secondmse{0.411} & 0.419 & -1.4 & 1.63x \\
 & 336 & \secondmse{0.449} & 0.444 & 0.450 & 0.445 & -0.3 & 1.02x & 0.455 & 0.443 & -1.3 & 1.13x & \bestmse{0.437} & 0.439 & +2.7 & 1.15x \\
 & 720 & \bestmse{0.459} & 0.464 & \secondmse{0.468} & 0.471 & -2.1 & 1.00x & 0.478 & 0.475 & -4.2 & 1.23x & 0.479 & 0.475 & -4.4 & 1.45x \\
\addlinespace[1.5pt]
\cmidrule(l){1-16}
\addlinespace[1pt]
\multirow{4}{*}{\textbf{ETTh2}} & 96 & \secondmse{0.294} & 0.343 & 0.297 & 0.344 & -0.8 & 1.06x & 0.295 & 0.345 & -0.3 & 1.25x & \bestmse{0.284} & 0.337 & +3.5 & 1.40x \\
 & 192 & \secondmse{0.348} & 0.379 & 0.348 & 0.379 & -0.0 & 1.06x & 0.351 & 0.380 & -1.1 & 1.15x & \bestmse{0.348} & 0.377 & +0.1 & 1.20x \\
 & 336 & \secondmse{0.382} & 0.405 & 0.387 & 0.408 & -1.1 & 1.01x & \bestmse{0.381} & 0.405 & +0.3 & 1.02x & 0.385 & 0.406 & -0.8 & 1.06x \\
 & 720 & 0.429 & 0.446 & 0.436 & 0.454 & -1.6 & 1.15x & \secondmse{0.410} & 0.432 & +4.5 & 1.26x & \bestmse{0.406} & 0.428 & +5.4 & 1.54x \\
\addlinespace[1.5pt]
\cmidrule(l){1-16}
\addlinespace[1pt]
\multirow{4}{*}{\textbf{ETTm1}} & 96 & 0.289 & 0.330 & 0.291 & 0.331 & -0.7 & 1.17x & \secondmse{0.288} & 0.330 & +0.3 & 1.50x & \bestmse{0.284} & 0.329 & +1.8 & 1.96x \\
 & 192 & 0.337 & 0.361 & 0.343 & 0.363 & -1.7 & 1.07x & \secondmse{0.336} & 0.360 & +0.4 & 1.37x & \bestmse{0.335} & 0.360 & +0.7 & 1.62x \\
 & 336 & 0.365 & 0.380 & \bestmse{0.364} & 0.381 & +0.4 & 1.04x & \secondmse{0.365} & 0.379 & +0.2 & 1.07x & 0.366 & 0.380 & -0.1 & 1.06x \\
 & 720 & 0.429 & 0.417 & 0.435 & 0.421 & -1.4 & 1.23x & \secondmse{0.427} & 0.417 & +0.5 & 1.52x & \bestmse{0.427} & 0.416 & +0.6 & 1.85x \\
\addlinespace[1.5pt]
\cmidrule(l){1-16}
\addlinespace[1pt]
\multirow{4}{*}{\textbf{ETTm2}} & 96 & 0.169 & 0.253 & \bestmse{0.169} & 0.252 & +0.4 & 1.08x & 0.173 & 0.254 & -2.2 & 1.31x & \secondmse{0.169} & 0.249 & +0.1 & 1.52x \\
 & 192 & \secondmse{0.230} & 0.293 & \bestmse{0.226} & 0.290 & +2.1 & 1.05x & 0.238 & 0.295 & -3.2 & 1.28x & 0.238 & 0.295 & -3.5 & 1.49x \\
 & 336 & 0.280 & 0.324 & \secondmse{0.277} & 0.327 & +1.1 & 0.42x & 0.277 & 0.324 & +1.1 & 1.42x & \bestmse{0.273} & 0.319 & +2.4 & 1.81x \\
 & 720 & 0.367 & 0.381 & 0.370 & 0.380 & -0.8 & 1.15x & \secondmse{0.366} & 0.378 & +0.1 & 1.34x & \bestmse{0.360} & 0.376 & +1.8 & 1.47x \\
\addlinespace[1.5pt]
\cmidrule(l){1-16}
\addlinespace[1pt]
\multirow{4}{*}{\textbf{Exchange}} & 96 & \bestmse{0.092} & 0.214 & 0.092 & 0.215 & -0.5 & 1.05x & 0.097 & 0.220 & -6.1 & 1.31x & \secondmse{0.092} & 0.213 & -0.2 & 1.87x \\
 & 192 & 0.209 & 0.328 & \secondmse{0.200} & 0.323 & +4.4 & 1.04x & 0.205 & 0.328 & +2.0 & 1.17x & \bestmse{0.192} & 0.314 & +8.5 & 1.18x \\
 & 336 & 0.398 & 0.461 & \secondmse{0.373} & 0.445 & +6.3 & 0.69x & \bestmse{0.354} & 0.434 & +11.2 & 1.08x & 0.376 & 0.450 & +5.6 & 0.68x \\
 & 720 & 1.001 & 0.741 & 1.056 & 0.765 & -5.5 & 1.03x & \secondmse{0.936} & 0.726 & +6.5 & 1.09x & \bestmse{0.848} & 0.689 & +15.2 & 1.09x \\
\addlinespace[1.5pt]
\cmidrule(l){1-16}
\addlinespace[1pt]
\multirow{4}{*}{\textbf{Traffic}} & 96 & \bestmse{0.423} & 0.267 & 0.429 & 0.262 & -1.3 & 1.01x & 0.427 & 0.268 & -0.9 & 1.49x & \secondmse{0.424} & 0.272 & -0.2 & 1.98x \\
 & 192 & \bestmse{0.444} & 0.275 & 0.447 & 0.271 & -0.8 & 1.02x & 0.449 & 0.276 & -1.1 & 1.50x & \secondmse{0.446} & 0.281 & -0.6 & 1.97x \\
 & 336 & \bestmse{0.460} & 0.279 & 0.465 & 0.278 & -1.2 & 1.27x & 0.465 & 0.283 & -1.1 & 1.34x & \secondmse{0.463} & 0.288 & -0.6 & 2.44x \\
 & 720 & \bestmse{0.490} & 0.295 & \secondmse{0.496} & 0.294 & -1.1 & 1.02x & 0.497 & 0.299 & -1.4 & 1.50x & 0.497 & 0.305 & -1.5 & 1.97x \\
\addlinespace[1.5pt]
\cmidrule(l){1-16}
\addlinespace[1pt]
\multirow{4}{*}{\textbf{Weather}} & 96 & 0.156 & 0.196 & 0.155 & 0.195 & +0.7 & 1.02x & \secondmse{0.155} & 0.195 & +0.8 & 1.39x & \bestmse{0.155} & 0.194 & +1.0 & 1.52x \\
 & 192 & 0.201 & 0.239 & \bestmse{0.200} & 0.236 & +0.4 & 1.02x & 0.202 & 0.239 & -0.4 & 1.39x & \secondmse{0.200} & 0.239 & +0.2 & 1.53x \\
 & 336 & 0.251 & 0.278 & 0.252 & 0.277 & -0.0 & 0.71x & \bestmse{0.251} & 0.278 & +0.3 & 1.37x & \secondmse{0.251} & 0.277 & +0.3 & 1.51x \\
 & 720 & 0.324 & 0.328 & 0.323 & 0.327 & +0.4 & 1.02x & \secondmse{0.321} & 0.327 & +0.9 & 1.39x & \bestmse{0.320} & 0.326 & +1.2 & 1.54x \\
\midrule
\multicolumn{2}{l}{\textbf{Avg $\pm$ Std}} & -- & -- & -- & -- & \shortstack{-0.3\\$\pm$2.0} & \shortstack{1.02\\$\pm$0.16} & -- & -- & \shortstack{-0.0\\$\pm$3.0} & \shortstack{1.31\\$\pm$0.14} & -- & -- & \shortstack{+1.0\\$\pm$3.6} & \shortstack{1.65\\$\pm$0.45} \\
\bottomrule
\end{tabular}
}
\end{table*}

\section{Model adaptations for the uniform-patch ablations}
\label{app:model-changes}

The controlled study requires a uniform-patch counterpart for each dynamic
method. In all cases, the design principle is the same: modify the patch
generation mechanism while leaving the downstream forecasting backbone,
prediction head, and training budget as close as possible to the original
model. The model implementations used for these ablations are publicly released
open-source research codebases.

\paragraph{EntroPE.} The EntroPE ablation uses the model's pre-existing static
patching mode with a fixed patch length. Relative to the original method, the
only conceptual change is that entropy-based boundary selection is disabled and
all segments are forced to have equal length. The rest of the forecasting pipeline
remains unchanged. Because this static mode already existed in the original
design, the EntroPE comparison is especially clean: it contrasts entropy-guided
segmentation with a matched fixed-length alternative inside the same architecture.

\paragraph{TimeMosaic.} TimeMosaic ordinarily learns, for each temporal region,
which patch length to use from a small candidate set. The uniform variant removes
this routing decision and applies a single fixed patch length across all regions.
To keep the comparison fair, the fixed-length variant reuses the same
patch-embedding operator associated with that patch size in the original
multi-scale embedding bank rather than introducing a new projection module. The
downstream encoder and prediction head are unchanged, while the auxiliary routing
regularization is disabled because no patch-size classification is performed.
This makes the comparison specific to adaptive scale selection rather than to
changes in representation capacity.

\paragraph{HDMixer.} HDMixer ordinarily uses deformable patch extraction, in
which the model predicts patch offsets and widths and applies an auxiliary
patch-entropy regularizer. The uniform variant replaces this deformable extraction
stage with standard non-overlapping fixed-length patches. Under this ablation,
the deformable localization mechanism and the associated auxiliary regularizer are
inactive, while the subsequent mixer backbone and prediction head are kept
unchanged. The number of patches is recomputed from the chosen fixed length so
that the downstream forecast module receives a tensorization matched to the
selected granularity.

\paragraph{Common interpretation.} Across all three methods, the ablations were
constructed to preserve the original forecasting pathway after patch formation
and to alter only the adaptive patch-selection component and any losses tied
directly to that component. This is the relevant comparison for the paper's
question: whether content-adaptive patch allocation itself provides a robust
advantage once the fixed-length baseline is tuned under the same experimental
protocol. Across variants, downstream tensors are matched to each method's own
patch-count convention so that the prediction head remains the same module even
when the number of fixed-length patches changes.

\section{Experimental hyperparameters and hardware}
\label{app:hyper}

\paragraph{Patch sizes evaluated.}
The uniform patch grids differ across methods, reflecting each architecture's
tokenisation constraints. EntroPE and HDMixer are swept over
$p \in \{4, 8, 16, 24, 32\}$; TimeMosaic over
$p \in \{8, 16, 32\}$, matching its native multi-scale candidate set.
For each method, dataset, and horizon, the uniform variant whose validation
MSE is lowest is selected and its test MSE is reported; the full
per-patch-size breakdown is in \Cref{app:full-tables}.

\paragraph{Hyperparameters.}
For EntroPE and TimeMosaic, we use the hyperparameters reported in the
respective published paper and official codebase. Where a
dataset-or-horizon-specific configuration is not provided by the original
authors, we select it by minimum validation MSE from the set of
configurations tried by those authors. For HDMixer, the original paper does
not report the complete dataset-level hyperparameter settings needed for the
benchmark suite, so we run a candidate sweep and select the
configuration with the lowest validation MSE for each dataset. The selected
HDMixer configurations are listed in \Cref{tab:hdmixer-hparams}. No
additional broad retuning of the adaptive baselines is performed; the goal is
to assess whether the adaptive patch-selection mechanism adds value beyond a
uniform sweep under the same published setup. All models are trained
separately in their original codebases.

\begin{table}[h]
\centering
\scriptsize
\setlength{\tabcolsep}{3pt}
\renewcommand{\arraystretch}{1.08}
\caption[EntroPE dynamic hyperparameters]{\textbf{EntroPE dynamic hyperparameters by dataset.} These are the dataset-level dynamic-model settings used in the comparison runs.}
\label{tab:entrope-hparams}
\resizebox{\linewidth}{!}{%
\begin{tabular}{lccccccccc}
\toprule
\textbf{Dataset} & \textbf{dim} & \textbf{heads} & \textbf{layers} & \textbf{max patch length} & \textbf{batch\_size} & \textbf{learning\_rate} & \textbf{dropout} & \textbf{threshold \% ($\theta$)} & \textbf{train\_epochs} \\
\midrule
ETTh1 & 8 & 2 & 1 & 24 & 64 & 0.001 & 0.05 & 3 & 20 \\
ETTh2 & 8 & 2 & 2 & 24 & 64 & 0.01 & 0.1 & 3.5 & 20 \\
ETTm1 & 16 & 2 & 1 & 24 & 32 & 0.01 & 0.1 & 3.5 & 20 \\
ETTm2 & 16 & 4 & 1 & 24 & 32 & 0.001 & 0.1 & 3 & 20 \\
Weather & 16 & 2 & 2 & 24 & 128 & 0.01 & 0.2 & 3 & 20 \\
Electricity & 32 & 4 & 2 & 24 & 32 & 0.01 & 0.1 & 3.5 & 20 \\
\bottomrule
\end{tabular}
}
\end{table}

\begin{table}[h]
\centering
\small
\setlength{\tabcolsep}{4pt}
\renewcommand{\arraystretch}{1.08}
\caption[TimeMosaic dynamic hyperparameters]{\textbf{TimeMosaic dynamic hyperparameters by dataset.}}
\label{tab:timemosaic-hparams}
\begin{tabular}{lccccccc}
\toprule
\textbf{Dataset} & \textbf{d\_model} & \textbf{d\_ff} & \textbf{Layers} & \textbf{Heads} & \textbf{Batch Size} & \textbf{Epochs} & \textbf{LR} \\
\midrule
Traffic & 128 & 256 & 1 & 2 & 16 & 10 & 1e-4 \\
ETTh1 & 512 & 2048 & 2 & 8 & 32 & 10 & 1e-4 \\
ETTh2 & 512 & 2048 & 2 & 8 & 32 & 10 & 1e-4 \\
ETTm1 & 512 & 2048 & 2 & 8 & 32 & 10 & 1e-4 \\
ETTm2 & 512 & 2048 & 2 & 8 & 32 & 10 & 1e-4 \\
Weather & 512 & 2048 & 2 & 8 & 32 & 10 & 1e-4 \\
Exchange & 512 & 2048 & 2 & 8 & 32 & 10 & 1e-4 \\
\bottomrule
\end{tabular}

\vspace{2pt}
\small Patience was $3$ for all TimeMosaic datasets listed above.
\end{table}

\begin{table*}[t]
\centering
\scriptsize
\setlength{\tabcolsep}{3pt}
\renewcommand{\arraystretch}{1.08}
\caption[HDMixer selected hyperparameters]{\textbf{HDMixer selected hyperparameters by dataset.}
The original HDMixer paper does not report the complete dataset-level
hyperparameters used for the benchmark suite. We therefore selected these
settings by validation MSE from a compact candidate sweep.}
\label{tab:hdmixer-hparams}
\resizebox{\textwidth}{!}{%
\begin{tabular}{lcccccccccc}
\toprule
\textbf{Dataset} & \textbf{patch\_len} & \textbf{stride} & \textbf{d\_model} & \textbf{d\_ff} & \textbf{n\_heads} & \textbf{e\_layers} & \textbf{dropout} & \textbf{fc\_dropout} & \textbf{learning\_rate} & \textbf{batch\_size} \\
\midrule
ETTh1 & 8 & 4 & 16 & 32 & 4 & 1 & 0.8 & 0.3 & 0.0005 & 256 \\
ETTh2 & 8 & 4 & 16 & 32 & 4 & 1 & 0.8 & 0.3 & 0.0005 & 256 \\
ETTm1 & 16 & 8 & 16 & 32 & 4 & 1 & 0.8 & 0.3 & 0.0010 & 256 \\
ETTm2 & 8 & 4 & 16 & 32 & 4 & 1 & 0.8 & 0.3 & 0.0005 & 256 \\
Electricity & 16 & 8 & 16 & 32 & 4 & 2 & 0.8 & 0.3 & 0.0010 & 32 \\
Traffic & 16 & 8 & 16 & 32 & 4 & 2 & 0.8 & 0.3 & 0.0010 & 8 \\
Weather & 16 & 8 & 16 & 32 & 4 & 2 & 0.3 & 0.1 & 0.0005 & 32 \\
Exchange & 16 & 8 & 16 & 32 & 4 & 2 & 0.8 & 0.3 & 0.0005 & 32 \\
\bottomrule
\end{tabular}%
}
\end{table*}

\section{Benchmark dataset notes}
\label{app:datasets}

The main experiments use eight standard multivariate long-horizon forecasting
benchmarks. The suite combines the ETT variants and Weather from the
Informer benchmark~\citep{zhou2021informer} with Electricity, Exchange, and
Traffic from the LSTNet benchmark collection~\citep{lai2018modeling}.
Across HDMixer, EntroPE, and TimeMosaic, we use the standard chronological
benchmark splits: 6:2:2 for ETT and 7:1:2 for Weather, Electricity, and
Traffic.

\paragraph{Dataset descriptions.}
\textbf{Electricity} records hourly electricity consumption for 321 clients
and is used for multivariate load forecasting. \textbf{ETTh1},
\textbf{ETTh2}, \textbf{ETTm1}, and \textbf{ETTm2} track electricity
transformer temperature and load, with the `h' variants sampled hourly and
the `m' variants sampled every 15 minutes. \textbf{Exchange} contains daily
exchange rates for multiple currencies relative to the U.S. dollar.
\textbf{Traffic} records hourly road occupancy measurements from a freeway
sensor network. \textbf{Weather} contains 10-minute atmospheric
measurements from a weather station.
Table~\ref{tab:dataset-summary} summarizes the dimensions, split sizes, and
sampling frequencies of these datasets.

\begin{table}[h]
\centering
\small
\setlength{\tabcolsep}{5pt}
\renewcommand{\arraystretch}{1.08}
\caption[Benchmark dataset summary]{\textbf{Benchmark dataset summary.}
`Dim' is the number of variables, `Split size' lists the
(train, validation, test) sequence counts, and `Frequency' is the native
sampling interval.}
\label{tab:dataset-summary}
\begin{tabular}{lcccc}
\toprule
\textbf{Dataset} & \textbf{Dim} & \textbf{Split size} & \textbf{Frequency} & \textbf{Domain} \\
\midrule
Electricity & 321 & (18317, 2633, 5261) & Hourly & Electricity load \\
ETTh1, ETTh2 & 7 & (8545, 2881, 2881) & Hourly & Transformer temperature \\
ETTm1, ETTm2 & 7 & (34465, 11521, 11521) & 15 min & Transformer temperature \\
Exchange & 8 & (5120, 665, 1422) & Daily & Exchange rates \\
Traffic & 862 & (12185, 1757, 3509) & Hourly & Road traffic \\
Weather & 21 & (36792, 5271, 10540) & 10 min & Weather \\
\bottomrule
\end{tabular}
\end{table}

\paragraph{Hardware and artifacts.}
All experiments were run on H100 GPUs with 80GB VRAM. The evaluation uses
public long-horizon forecasting datasets and the official codebases of the
three compared methods.

% =============================================================================
%  Self-contained appendix: Continuous-rate diagnostic
%  Figures generated by scripts/fig_continuous_synthetic_example.py and
%  scripts/fig_continuous_rate_diagnostic.py from
%  the table in:
%  C:/Users/feder/xwechat_files/wxid_d3be9tc38pfg22_cf0c/msg/file/2026-05/continue_experiment.tex
% =============================================================================

\section{Continuous-rate diagnostic: isolating the allocation mechanism}
\label{app:continuous-rate}

The real-data ablations in \Cref{sec:experiments} compare trained dynamic
patchers against tuned uniform baselines, but those comparisons mix three
effects: the rate-allocation decision itself, the discrete placement of
patch boundaries, and the change in token count and embedding scale induced
by each patch length. If the uniform variant wins, is the routing signal
misaligned, or do boundary placement and tokenisation overhead wash out a
small alignment benefit? To separate those possibilities, we use a
diagnostic with no patch boundaries at all and intervene only on
per-position information quality. This isolates the rate-allocation
mechanism from the rest of the pipeline.

\subsection{Motivation}
\label{app:continuous-motivation}

\Cref{thm:threshold} states that a dynamic allocation improves over a
budget-matched uniform baseline if and only if the alignment gain
$-\Cov(K,D(r))$ exceeds the Jensen penalty
$\bar K(\mathbb{E}[D(r)]-D(\bar r))$. The theorem is stated for an abstract
convex distortion $D$ and a complexity field $K_t$; it says nothing about
patch boundaries or token counts. The question here is whether a trained
forecasting backbone responds to this tradeoff once those discrete pipeline
effects are removed. The diagnostic below keeps the model architecture,
dataset, training protocol, token grid, and total information budget fixed,
and varies only the correlation between per-position information quality and
the known complexity field.

This is a confirmatory experiment by design. It does not test whether real
adaptive methods achieve high alignment, or whether their routing proxies
track loss-relevant complexity. It tests the narrower claim that the
alignment-minus-Jensen structure of \Cref{thm:threshold} governs the
rate-allocation channel when patch-boundary effects are absent.

\subsection{Synthetic setup used for the diagnostic}
\label{app:continuous-dataset}

The synthetic task generates sequences of length $96$ with a forecast
horizon of $24$. Each context contains two regions: a low-frequency noisy
background spanning positions $[0,72)$ and an informative window at
positions $[72,96)$ where a stochastic motif and smoothed high-frequency
texture are inserted. The forecast target depends more heavily on
preserving fine-scale content in the informative window, so error rises
faster when that region is corrupted. The generator also produces a
ground-truth complexity field:
\[
K_t =
\begin{cases}
0.12 & t \in [0,72), \\
1.00 & t \in [72,96),
\end{cases}
\]
with a four-step linear transition at the boundary. We use this field as
the theoretical complexity process $K_t$ throughout the diagnostic. Because
$K_t$ is set by the generator rather than estimated from the model, the
diagnostic avoids the circularity discussed in \Cref{sec:complexity}: the
alignment target is known before training begins.

\subsection{Model and training}
\label{app:continuous-model}

The backbone is a simple encoder-decoder Transformer used here as a
mechanism check. The model has
$d_{\mathrm{model}}=32$, $d_{\mathrm{latent}}=64$, four attention heads, and
$1/1/1$ encoder, decoder, and latent layers. It is trained for $32$ epochs
with batch size $128$, learning rate $10^{-3}$, and weight decay $10^{-4}$.
The model processes the full $96$-step context at every position; no patch
boundaries are introduced and no tokens are merged or dropped. The token
grid is identical to the uniform baseline in every run. Validation selection
never chose the final epoch (best epoch in $[10,28]$), so the result is not
driven by stopping exactly at the training budget.

\subsection{Intervention: continuous noise schedule}
\label{app:continuous-intervention}

Instead of varying patch lengths, we corrupt each context position with
additive Gaussian noise whose variance depends on the local rate:
\begin{equation}
\label{eq:app-noise}
\tilde x_t = x_t + \eta_t,
\qquad
\eta_t \sim \mathcal{N}\bigl(0,\;\sigma^2(r_t)\bigr),
\qquad
\sigma^2(r_t) = c\,(\bar r/r_t)^{1.4}.
\end{equation}
Higher $r_t$ means a cleaner observation; lower $r_t$ means more noise. The
exponent $1.4$ gives a monotone convex distortion-rate surrogate for this
diagnostic, not a calibrated physical noise model. The constant $c$ is
chosen so that the uniform baseline reproduces the test MSE of the matched
continuous-rate diagnostic.

The local rates $\{r_t\}$ are generated by a $\rho$-controlled allocator
that fixes the total budget ($\mathbb{E}[r]=\bar r=1/16$) and the rate
variance ($\hat\sigma_r = 9.375 \times 10^{-3}$) while sweeping the achieved
correlation $\hat\rho(K,r)$ across a nine-point grid from $-1$ to $+1$.
Because the rates are continuous, the achieved correlation matches the
target to machine precision. Across the grid, only the noise profile
changes; the model architecture, training protocol, token count, and
compute budget stay fixed.

Each target $\rho^\star$ is run with twenty independent seeds over a
$512/256/512$ train/validation/test split. The same seed-matched clean
signal and noise tensors are used for the dynamic and uniform arms, so
every row in \Cref{tab:app-continuous} is a paired comparison.

\subsection{Results}
\label{app:continuous-results}

\begin{table}[H]
\centering
\caption{Continuous-rate diagnostic on a simple Transformer backbone. Each row
averages twenty seeds; the observed gain is the mean $\pm$ across-seed
standard deviation of the paired test-MSE improvement over the
seed-matched uniform baseline, normalized by the uniform test MSE. The
alignment, Jensen, and $\Delta_D(r)$ columns are diagnostic quantities
normalized by the theoretical uniform loss $\bar K D(\bar r)$. The achieved
$\hat\rho(K,r)$ matches the target because $r_t$ is continuous.}
\label{tab:app-continuous}
\small
\setlength{\tabcolsep}{6pt}
\renewcommand{\arraystretch}{1.06}
\begin{tabular}{@{}cccccc@{}}
\toprule
$\rho^\star$ & achieved $\hat\rho(K,r)$ & observed gain & alignment & Jensen & $\Delta_D(r)$ \\
\midrule
$-1.0$ & $-1.000$ & $-24.00\,\pm\,1.29$ & $-28.67$ & $4.74$ & $-33.41$ \\
$-0.8$ & $-0.800$ & $-17.91\,\pm\,1.30$ & $-21.16$ & $4.15$ & $-25.32$ \\
$-0.5$ & $-0.500$ & $-10.07\,\pm\,1.23$ & $-11.49$ & $4.00$ & $-15.48$ \\
$-0.2$ & $-0.200$ & $-3.36\,\pm\,0.88$  & $-3.11$  & $4.26$ & $-7.36$  \\
$0.0$  & $+0.000$ & $+0.72\,\pm\,0.93$  & $+1.83$  & $4.47$ & $-2.65$  \\
$+0.2$ & $+0.200$ & $+4.58\,\pm\,0.97$  & $+6.26$  & $4.61$ & $+1.64$  \\
$+0.5$ & $+0.500$ & $+9.57\,\pm\,0.78$  & $+11.99$ & $4.51$ & $+7.49$  \\
$+0.8$ & $+0.800$ & $+13.92\,\pm\,0.92$ & $+16.72$ & $3.91$ & $+12.81$ \\
$+1.0$ & $+1.000$ & $+16.39\,\pm\,0.83$ & $+19.51$ & $3.30$ & $+16.22$ \\
\bottomrule
\end{tabular}
\end{table}

\begin{figure}[H]
\centering
\includegraphics[width=0.92\linewidth]{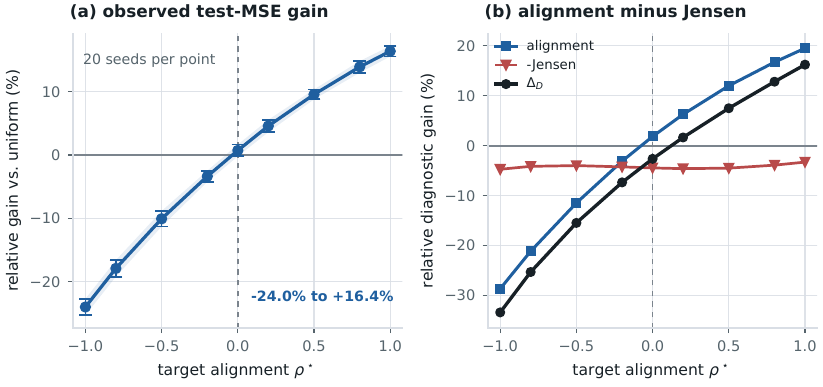}
\caption{Continuous-rate diagnostic on a simple Transformer backbone.
\textbf{Left:} seed-averaged observed
test-MSE gain over the seed-matched uniform baseline as a function of
target $\rho^\star$; twenty seeds per point, error bars show across-seed
standard deviation. \textbf{Right:} alignment-minus-Jensen decomposition
of \Cref{eq:exact-threshold} evaluated on the same continuous schedules.
The Jensen term is approximately flat because the budget and rate variance
are held fixed, so $\Delta_D(r)$ tracks the alignment term across the
grid.}
\label{fig:app-continuous}
\end{figure}

\Cref{tab:app-continuous} and \Cref{fig:app-continuous} report the
results.

\paragraph{Monotone ordering.}
The mean observed gain increases monotonically from $-24.00\%$ at
$\rho^\star=-1$ to $+16.39\%$ at $\rho^\star=+1$. The Spearman rank
correlation between $\rho^\star$ and the mean gain is $1.0$ across the
nine-point grid (exact two-sided permutation $p = 5.5 \times 10^{-6}$).
Every negative-$\rho$ row has negative mean gain; every row from
$\rho^\star=+0.2$ onward has positive mean gain. The $\rho^\star=0$ row is
the only exception to the fitted $\Delta_D(r)$ sign: the analytic
diagnostic is negative ($-2.65\%$), but the trained Transformer obtains a
small positive gain ($+0.72\%$), with 95\% confidence interval
$[+0.28,+1.15]$. The same monotone ordering and nonzero-$\rho$ sign split
are preserved when the distortion exponent is changed to $1.0$ or $2.0$.
The $\rho^\star=0$ row is also positive under all three exponents (power
$1.0$: $+0.46\%$; power $1.4$: $+0.72\%$; power $2.0$: $+1.29\%$),
supporting a noise-robustness interpretation.

\paragraph{Flat Jensen term.}
The Jensen penalty stays in the $3.3$--$4.7\%$ band because the budget and
rate variance are held fixed. The variation in $\Delta_D(r)$ across the
sweep therefore comes almost entirely from the alignment term. In other
words, the structure of \Cref{eq:exact-threshold} still shows up in a
trained forecasting backbone, not just in the formula.

\paragraph{Asymmetry.}
The negative side of the sweep produces larger absolute gains than the
positive side. The power-law noise map explains this asymmetry:
under-allocation raises noise variance superlinearly, while over-allocation
reduces it sublinearly. We therefore treat $\Delta_D(r)$ mainly as an
indicator of ordering and sign, not as a calibrated predictor of each row's
test MSE.

\subsection{Interpretation}
\label{app:continuous-reading}

Within this controlled noise design, the trained Transformer backbone tracks
the alignment-minus-Jensen decomposition of \Cref{thm:threshold}
monotonically across the full $\rho^\star$ grid. For negative-$\rho$
conditions, the observed magnitude is smaller than $|\Delta_D(r)|$; for
positive-$\rho$ conditions, the observed gain meets or exceeds the analytic
diagnostic. We therefore use $\Delta_D(r)$ mainly to interpret ordering and
the nonzero-$\rho$ sign split, not to predict every trained test-MSE value
exactly.

The model does not observe $K_t$ directly, so this diagnostic cannot
separate explicit complexity-aware allocation from learned robustness to
position-specific signal-to-noise ratio. The positive $\rho^\star=0$ result
fits that broader noise-robustness story: even when the continuous rate
profile is orthogonal to $K_t$, the Transformer can still benefit from
training on a structured nonuniform noise profile. This supports a narrower
claim than the real-data ablations in the main paper. Once patch-boundary,
token-count, and length-quantisation effects are removed, the observed gain
is ordered by alignment. At the same time, the neutral-row sign mismatch
shows that the trained Transformer is doing more than the fitted scalar
rate-distortion decomposition captures.

\end{document}